\DeclareMathAlphabet{\mathpzc}{OT1}{pzc}{m}{it} 
\DeclareMathOperator{\inj}{inj}
\newcommand{\mbC}{\mathbb{C}}
\newcommand{\mbPk}{\mathbb{P}(k)}
\newcommand{\mcB}{\mathcal{B}}
\newcommand{\mcC}{\mathcal{C}}
\newcommand{\mcM}{\mathcal{M}}
\newcommand{\mbR}{\mathbb{R}}
\newcommand{\mcS}{\mathcal{S}}
\newcommand{\kmax}{\kappa_{\max}}
\newcommand{\kmin}{\kappa_{\min}}
\DeclareMathOperator{\argmax}{argmax}
\DeclareMathOperator{\argmin}{argmin}
\newcommand{\vech}{\text{vech}}
\newcommand{\diff}{{\text d}}
\newtheorem{assumption}{Assumption}
\newtheorem{definition}{Definition}
\newtheorem{theorem}{Theorem}
\newtheorem{lemma}{Lemma}
\theoremstyle{definition}
\newtheorem{remark}{Remark}
\title{Shape And Structure Preserving Differential Privacy}
\author{%
Carlos Soto \\
Department of Statistics \\
Pennsylvania State University \\
University Park, PA  \\
\texttt{cjs7363@psu.edu} 
\And
Karthik Bharath  \\
School of Mathematical Sciences \\
University of Nottingham \\
Nottingham, UK \\
\texttt{Karthik.Bharath@nottingham.ac.uk}
\And
Matthew Reimherr \\
Department of Statistics \\
Pennsylvania State University \\
University Park, PA  \\
\texttt{mreimherr@psu.edu} 
\And
Aleksandra Slavkovic \\
Department of Statistics \\
Pennsylvania State University \\
University Park, PA  \\
\texttt{sesa@psu.edu} 
}
\begin{document}

\maketitle

\begin{abstract}
  It is common for data structures such as images and shapes of 2D objects to be represented as points on a manifold. The utility of a mechanism to produce sanitized differentially private estimates from such data is intimately linked to how compatible it is with the underlying structure and geometry of the space. In particular, as recently shown, utility of the Laplace mechanism on a positively curved manifold, such as Kendall’s 2D shape space, is significantly influenced by the curvature. Focusing on the problem of sanitizing the Fr\'echet mean of a sample of points on a manifold, we exploit the characterisation of the mean as the minimizer of an objective function comprised of the sum of squared distances and develop a K-norm gradient mechanism on Riemannian manifolds that favors values that produce gradients close to the the zero of the objective function. For the case of positively curved manifolds, we describe how using the gradient of the squared distance function offers better control over sensitivity than the Laplace mechanism, and demonstrate this numerically on a dataset of shapes of corpus callosa. Further illustrations of the mechanism’s utility on a sphere  and the manifold of symmetric positive definite matrices are also presented.
\end{abstract}

\section{Introduction}
The amount of publicly available data has increased exponentially over the past decade and alongside with it the need for data privacy has emerged. As data gets increasingly more complex from scalar values to data on nonlinear manifolds, such as images, shapes and covariance matrices, there is a need for data privacy algorithms to adapt to the nonlinearity in, and preserve the geometric structure of,  the data or parameter space. Intuitively, when the geometry of the manifold significantly influences sensitivity bounds through curvature-dependent terms, one would expect a structure-preserving privacy mechanism developed directly on manifolds to have better utility that its Euclidean counterpart on higher-dimensional ambient spaces within which the manifold is embedded. This was observed in recent work on a Laplace mechanism on manifolds \citep{reimherr2021differential}, and especially for manifolds with positive curvature.

Apart from spherical, or directional, data, an archetypal example of data on a positively curved manifold arises in statistical shape analysis of planar configurations representing 2D objects; the Kendall shape space of 2D points \citep{kendall1984shape} modulo shape-preserving similarity transformations (e.g., rotations) is a Riemannian manifold with positive curvature. It is evident in this setting that the utility of a privacy mechanism will depend on how well it is compatible with shape preserving transformations of the data---sanitized versions of a shape summary of a 2D image of a bird should `look' like a bird, impervious to (global) rotation, scaling and translation.

Structure-preserving mechanisms within specific contexts have been considered before; see, for example \cite{jiang2016wishart} concerning covariance matrices;  \cite{imtiaz2018differentially,imtiaz2016symmetric,gilad2017smooth,awan2019benefits,chaudhuri2013near,biswas2020coinpress} for private principal component analysis; and \cite{sheffet2015private} for private linear regression. A general structure-preserving Laplace mechanism on manifolds was considered by \cite{reimherr2021differential}, and in \cite{awan2021structure} the relationship between the data space and the sensitivity was examined in sufficient generality. However, research on privatizing shape summaries  with theoretical guarantees is conspicuous in its absence within the privacy literature; the most relevant one we have found comes from computer vision which produces random faces but offers no differentially private guarantees \citep{karras2019style}.



%

The two-fold motivation for our paper is to a develop a privacy mechanism for statistical shape analysis, and more generally for data on manifolds, that is compatible with the geometry of the underlying space, and further, offers better control on how the curvature influences global sensitivity bounds. To this end, our main contributions are as follows.
\begin{enumerate}[leftmargin=*]
\itemsep 0em
    \item We develop an extension of the K-norm Gradient Mechanism (KNG) on $\mathbb R^d$ \citep{NEURIPS2019_faefec47} for producing sanitized private estimates under the pure differential privacy framework to the setting of Riemannian manifolds, with a focus on mean estimation.
    \item We derive a curvature-dependent upper bound on global sensitivity, which for the important case of positively curved manifolds is smaller than the corresponding one for a recently proposed Laplace mechanism \citep{reimherr2021differential}. Our numerical examples verify that the KNG mechanism on manifolds shares the powerful utility of its counterpart on $\mathbb R^d$, and this is is tied to the curvature of the manifold and not on the dimension of the ambient space. 
    \item We introduce the first, to our knowledge, differentially private shape analysis under Kendall's 2D shape space framework, and favorably compare its performance to mechanisms designed for the higher-dimensional ambient space (and not directly on the manifold) on a dataset of corpus callosa obtained from MR images.
\end{enumerate}

\section{Background} \label{sec:background}
In this section we introduce the necessary tools from differential geometry and differential privacy as well as the notation for this paper. For a thorough exposition of differential geometry and shape analysis we refer to \cite{do1992riemannian,srivastava2016functional} and for DP we refer to \cite{dwork2014algorithmic}.
\subsection{Differential geometry}
Let $\mcM$ be a complete, connected Riemannian manifold of dimension $d$. Denote by $T_m\mcM$ the tangent space at each point $m\in\mcM$ and by $T_m\mcM$, the collection $T\mcM=\{T_m\mcM:m\in\mcM \}$ of all tangent spaces, known as the tangent bundle. On the tangent space $T_m\mcM$ at each point $m$,  we can define an inner product $\langle\cdot,\cdot\rangle_m:T_m\mcM\times T_m\mcM\rightarrow\mbR$ with induced norm $\|\cdot\|_m$; the collection $\{\langle\cdot,\cdot\rangle_m:m\in\mcM\}$ is referred to as a Riemannian metric. The Riemannian metric varies smoothly along the manifold and allows us to measure distances, volumes, and angles.

For a curve, or path, $\alpha:[0,1]\rightarrow\mcM$, the vector $\alpha'(t)$  is its instantaneous velocity, and its length $L(\alpha)$ is the value $\int_0^1 \|
\alpha'(t)\|^{1/2}_{\alpha(t)} \diff t$. A curve $\alpha$ is said to be arc-length parameterised if $\|\alpha'(t)\|_{\alpha(t)}\equiv 1$ and thus $L(\alpha(0),\alpha(t_0))=t_0$. Geodesic curves are those with zero acceleration for all $t$. The distance $\rho$ between two points $p$ and $q$ is the length of the shortest path, a segment of a geodesic curve connecting the two known as the minimal geodesic: $\rho(p,q)=\inf \{L(\alpha)| \alpha:[0,1] \to \mcM;\alpha(0)=p,\alpha(1)=q\}$. 

 Given a point $p$ and a geodesic $\alpha$ with $\alpha(0)=p$, a cut point of $p$ is defined as the point $\alpha(t_0)$ such that $\alpha$ is a minimal geodesic on the interval $[0,t_0]$ but fails to be for $t>t_0$. The set of all cut points of geodesics starting at $p$ is its cut locus. The injectivity radius of $p$ is the distance to its cut locus, and the injectivity radius $\text{inj }\mcM$ of $\mcM$ is the infimum of the injectivity radii of all points in $\mcM$. 

The next two tools are necessary for moving on the manifold, moving to and from the tangent spaces, and are particularly useful for sampling from distributions on manifolds. For a geodesic $\alpha$ starting at $p$ with initial velocity $v$, the \emph{exponential map} $\exp(p,\cdot):T_p \mcM \to \mcM:$ is defined as $\exp(p,v)=\alpha(1)$. From the Hopf-Rinow theorem, on a complete manifold the exponential map is surjective. On an open ball around the origin in $T_pM$ it is a diffeomorphism onto its image outside of the cut locus of $p$, and a well-defined inverse $\exp^{-1}(p,\cdot):\mcM \to T_p \mcM$ exists, known as the \emph{inverse exponential} or logarithm map, and maps a point on $\mcM$ outside of the cut locus of $p$ to $T_pM$; thus for any $q$ outside of the cut locus of $p$, $\rho(p,q)=\|\exp^{-1}(p,q)\|_p$. 

There are many notions of curvature of a Riemannian manifold. We will mainly be concerned with \emph{sectional curvature} at a point $p$, defined to be the Gaussian curvature at $p$ of the two-dimensional surface swept out by the set of all geodesics starting at $p$ with initial velocities lying in the two-dimensional subspace of $T_p\mcM$ spanned by two linear independent vectors. 

Further, volumes of sets can be computed using the Riemannian volume form $\text{d}\mu$. In local coordinates, the coordinate-independent Riemannian volume form is defined as $\sqrt{\text{det}(g)}\diff x_1\wedge \diff x_2\cdots\wedge \diff x_d$, where
$g_{ij}=\langle \partial x_i,\partial x_j \rangle$ is the Riemannian metric tensor.  
A vector field on $\mcM$ is a differentiable mapping $\mcM \to T\mcM$ that assigns to each point $m$ on $\mcM$ a tangent vector in $T_m \mcM$. Suppose we have a smooth function $h$ defined over $\mcM$, the gradient $\nabla h$ of $h$ is the vector field defined by the relationship $\langle \nabla h(p),v\rangle_p = \partial h_p(v)$ for $p\in\mcM$ and $v\in T_p\mcM$.

\subsection{Differential privacy}
Let $D=\{x_1,\dots, x_n\}\subset\mcM$ denote a dataset of size $n$. 
In several statistical and machine learning problems, one of the most popular tools for releasing a sanitized  version of a minimizer 
\[
\hat \theta = \argmax_{x \in \mcM} U(x;D)
\]
of a utility function $U$ over $\mcM$ is the exponential mechanism introduced by \cite{4389483}, based on a density
\[
f(x;D) \propto \exp \left\{\sigma^{-1} U(x;D)  \right\},
\]
where the scale or rate parameter $\sigma$ is chosen to achieve a desired level of privacy and accounts for the sensitivity of $U$. If $\mcM$ is the Euclidean space, the density is with respect to the Lebesgue measure, and for finite or countable $\mcM$, it is with respect to the counting measure.

A modification of the exponential mechanism is the K-norm Gradient Mechanism (KNG) introduced by  \cite{NEURIPS2019_faefec47}, which turns out to have better utility quite generally. The idea is that the maximizer of $U(x;D)$ is also the point at which its gradient is zero. 
On a Riemannian manifold $\mcM$, a KNG mechanism can be constructed using the gradient vector field $\nabla U(x;D)$, where the gradient is defined with respect to the Riemannian metric, with the (unnormalised) density
\[
f(x;D) \propto \exp\{ - \sigma^{-1} \| \nabla U(x;D)\|_{x} \}
\]
defined with respect to the volume measure \citep{reimherr2021differential}, where $\|\cdot\|_x$ refers to the norm with respect to Riemannian metric at $x$ and should not be construed as the subscript $k$ as in a $k$-norm. Conditions on the sectional curvatures of $\mcM$ are typically needed to ensure that the density is integrable, and guarantees a finite normalizing constant, on general manifolds $\mcM$. Under such conditions we can introduce a definition of differential privacy similar to that of \cite{blum2005practical}.  
\begin{definition}
For $\epsilon>0$, a privacy mechanism 
satisfies $\epsilon$-differential privacy (pure differential privacy, $\epsilon$-DP) if for any pair of adjacent databases $D$ and $D'$, denoted $D\sim D'$, we have that 
$$\int_S f(x;D) \diff\mu\leq e^\epsilon \int_S f(x;D')\diff\mu$$ 
for any measurable set $S\subset\mcM$.
\end{definition}

To determine the rate parameter for the KNG mechanism, one needs to quantify the robustness or sensitivity, of the norm of gradient vector field for adjacent databases. 
\begin{theorem}\label{thm:1}
If for all neighboring $D \sim D'$ and almost all $x$ we have
\[
\|\nabla U(x;D) - \nabla U(x;D')\|_x \leq \Delta,
\]
then one can take $\sigma = 2 \Delta/\epsilon$ so that the KNG mechanism will be $\epsilon$-DP. Here $\Delta$ is referred to as the global sensitivity.
\end{theorem}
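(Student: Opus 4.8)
The plan is to reduce the set-level privacy inequality to a pointwise bound on the ratio of densities, and then control the unnormalized integrand and the normalizing constant separately. Write $\tilde{f}(x;D) = \exp\{-\sigma^{-1}\|\nabla U(x;D)\|_x\}$ for the unnormalized density and $Z(D) = \int_\mcM \tilde{f}(x;D)\,\diff\mu$ for its normalizing constant, which I assume to be finite and strictly positive; this is precisely the integrability condition flagged just before the statement. Then $f(x;D) = \tilde{f}(x;D)/Z(D)$, and the definition of $\epsilon$-DP will follow once I show $f(x;D) \leq e^\epsilon f(x;D')$ for almost every $x$, since integrating this inequality over any measurable $S$ yields $\int_S f(x;D)\,\diff\mu \leq e^\epsilon \int_S f(x;D')\,\diff\mu$ immediately.

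First I would bound the pointwise ratio of unnormalized densities, $\tilde{f}(x;D)/\tilde{f}(x;D') = \exp\{\sigma^{-1}(\|\nabla U(x;D')\|_x - \|\nabla U(x;D)\|_x)\}$. The reverse triangle inequality for the norm $\|\cdot\|_x$ on the tangent space $T_x\mcM$, combined with the hypothesis $\|\nabla U(x;D) - \nabla U(x;D')\|_x \leq \Delta$ valid for almost all $x$, gives $|\,\|\nabla U(x;D')\|_x - \|\nabla U(x;D)\|_x\,| \leq \Delta$, and hence the two-sided bound $e^{-\sigma^{-1}\Delta} \leq \tilde{f}(x;D)/\tilde{f}(x;D') \leq e^{\sigma^{-1}\Delta}$ for almost every $x$.

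Next I would transfer the lower end of this bound to the normalizing constants. From $\tilde{f}(x;D) \geq e^{-\sigma^{-1}\Delta}\,\tilde{f}(x;D')$ almost everywhere, integrating against $\diff\mu$ yields $Z(D) \geq e^{-\sigma^{-1}\Delta} Z(D')$, i.e.\ $Z(D')/Z(D) \leq e^{\sigma^{-1}\Delta}$. Multiplying by the upper end of the integrand bound gives $f(x;D)/f(x;D') = (\tilde{f}(x;D)/\tilde{f}(x;D'))\,(Z(D')/Z(D)) \leq e^{2\sigma^{-1}\Delta}$ for almost every $x$. Setting $\sigma = 2\Delta/\epsilon$ makes this factor exactly $e^\epsilon$, and integrating the resulting density-ratio bound over an arbitrary measurable $S$ delivers the $\epsilon$-DP inequality.

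There is no deep obstacle here; the argument is the standard exponential-mechanism calculation, with the reverse triangle inequality on each tangent space playing exactly the role the triangle inequality plays in the Euclidean KNG proof, and no feature of the manifold beyond the metric-induced norm on $T_x\mcM$ entering. The only genuine care points are bookkeeping ones: ensuring $0 < Z(D), Z(D') < \infty$ so that the densities and their ratios are well defined, and confirming that the ``almost all $x$'' qualifier in the hypothesis is harmless, which it is because every inequality is ultimately integrated.
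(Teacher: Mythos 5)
Your proof is correct and is precisely the argument the paper intends: the paper's proof is the one-line remark that the result ``follows directly from an application of the triangle inequality,'' and your write-up supplies exactly that standard KNG/exponential-mechanism calculation, with the reverse triangle inequality bounding the unnormalized density ratio by $e^{\Delta/\sigma}$ and the normalizing-constant ratio contributing the second factor of $e^{\Delta/\sigma}$ that accounts for the $2$ in $\sigma = 2\Delta/\epsilon$. No discrepancy to report.
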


The proof of Theorem \ref{thm:1} follows directly from an application of the triangle inequality. The global sensitivity $\Delta$ plays a crucial role in determining the behavior of KNG about the optimizer of $U$. Thus far, $U$ has been a generic utility function however consideration needs to be taken for cases when $U$ does not have a global optimizer.

\section{Differentially private Fr\'echet mean estimation theory} \label{sec:Mean}
Possibly the most fundamental summary statistic is the average or mean. In a Euclidean setting, the mean has a closed form expression, however for general manifolds it is not as straightforward. The Fr\'echet mean is the natural extension of the mean and is defined as the minimizer of the variance functional 
\[
F(\cdot,D): \mcM \to \mathbb R_+, \quad F(x;D) := \frac{1}{2n} \sum_{i=1}^n \rho(x_i, x)^2,
\]
where  $\rho$ is the Riemannian distance. In general, the minimizer may not be unique, sometimes referred to as the ``set of Fr\'echet means", or may not even exist. Study of conditions that ensure existence and uniqueness has a long history (see for e.g., \cite{karcher1977riemannian} and \cite{Afsari2011}), and we thus take some necessary precautions outlined in Assumption \ref{A1}. 

\begin{assumption}\label{A1}
The dataset $D\subset B_r(p_0)$, a geodesic ball centered at $p_0$ with radius $r$, with $r<\frac{1}{2}\min \{\inj\mcM,\frac{\pi}{2}\kmax^{-1/2}\}$ and $\kmax>0$ is an upper bound on the sectional curvatures of $\mcM$.
\end{assumption}


For non-positively curved $\mcM$, $\kmax^{-1/2}$ is interpreted as $+\infty$, and Assumption \ref{A1} states that the data $D$ must be bounded. That is, the data can lie in a ball of any size as long as we  know how large the ball is as this directly affects the global sensitivity. The weaker requirement $r<1/2\min \{\inj\mcM,\pi\kmax^{-1/2}\}$ suffices to ensure existence and uniqueness of the Fr\'{e}chet mean. However, the stronger Assumption \ref{A1} is required to ensure that $(x,y) \mapsto \rho^2(x,y)$ is convex along geodesics (geodesically convex) within $B_r(p_0)$ \cite{Le2001}; for example, $\rho^2$ is geodesically convex when restricted to a ball of radius smaller than $\pi/4$ on unit spheres since $\kmax=1$. For $\rho^2$ to be \emph{strong} geodesically convex, an additional lower bound on sectional curvatures of $\mcM$ is required (see Lemma \ref{thm:lemma1}). 

For mean estimation a natural utility function is $U(x;D) = -F(x; D)$. The KNG mechanism makes use of the gradient of $U(x; D)$ at $x$ and hence the (Riemannian) gradient of $-F(x; D)$, 
which in-turn is linked to the gradient of square-distance function $x \mapsto \rho(x_i,x)^2$ for fixed $x_i$.  Under Assumption 1, each $x_i$ lies within the injectivity radius of $x$, and the gradient
$\nabla \rho(x_i,x)^2 = -2 \exp^{-1} (x,x_i).$
%
Thus
\begin{align*}
    \nabla F(x;D) & = -\frac{1}{n} \sum_{i=1}^n \exp^{-1} (x,x_i).\\
\end{align*}
The KNG mechanism, then samples from the density (with respect to the volume measure) 
\[
f(x;D) \propto \exp\left\{-\frac{1}{\sigma n}\left\| \sum_{i=1}^n \exp^{-1} (x,x_i) \right\|_x  \right\}
\]
to release a private statistical summary of the mean; note that when restricted to an open ball as per Assumption \ref{A1}, the density $f$ has a finite normalising constant that depends on $\sigma$.

In Theorem \ref{thm:sensitivity} we provide a bound for the the global sensitivity of the KNG mechanism for mean estimation on Riemannian manifolds. The bound is curvature-dependent in the sense that it depends on the radius $r$ of the ball in which the data lies, the sample size $n$, and a function of $r$ and $\kmax$.  Specifically, the bound is equivalent to the bound obtained for Euclidean spaces for non-positively curved $\mcM$ but is inflated for positively curved $\mcM$. 

\begin{theorem}\label{thm:sensitivity}
Under Assumption \ref{A1} let $D=\{x_1,x_2,\dots,x_n\}$ and $D'=\{x_1,x_2,\dots,x'_n\}$ be adjacent datasets 
. Then 
\begin{align}
\label{hmax}
     \|\nabla U(x;D)-\nabla{U}(x;D')\|_x & \leq \frac{2r(2-h_{\max}(2r,\kmax))}{n} \qquad \text{where}    \nonumber \\
     h_{\max}(s,\kmax) &:= 
    \left\{
    \begin{array}{cc}
    s \sqrt{\kmax}\cot(s\sqrt{\kmax}) ,&  \kmax >0 \thickspace;\\
    1 ,& \kmax \leq 0 \thickspace.
    \end{array}
    \right.
\end{align}
\end{theorem}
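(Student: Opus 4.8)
The plan is to reduce the statement to a single two-point estimate in one tangent space and then to control it by comparison geometry. Since $D$ and $D'$ agree in all but the last observation, the sums defining $\nabla U$ telescope, and
\[
\nabla U(x;D) - \nabla U(x;D') = \frac{1}{n}\left(\exp^{-1}(x,x_n) - \exp^{-1}(x,x_n')\right),
\]
with both vectors lying in $T_x\mcM$. So it suffices to prove $\|\exp^{-1}(x,x_n) - \exp^{-1}(x,x_n')\|_x \leq 2r\,(2 - h_{\max}(2r,\kmax))$. Here I would use that, under Assumption \ref{A1}, $x$ and the $x_i$ all lie in the geodesically convex ball $B_r(p_0)$, so every logarithm below is well defined.

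First I would connect the two perturbed points by the (unique) minimizing geodesic $\gamma:[0,1]\to\mcM$ with $\gamma(0)=x_n'$ and $\gamma(1)=x_n$, which stays inside $B_r(p_0)$ by geodesic convexity, and write the difference as a path integral in the \emph{fixed} vector space $T_x\mcM$:
\[
\exp^{-1}(x,x_n) - \exp^{-1}(x,x_n') = \int_0^1 \frac{d}{dt}\exp^{-1}(x,\gamma(t))\,dt.
\]
The integrand is then identified with a Jacobi field: the geodesic variation $c(s,t)=\exp(x,\,s\exp^{-1}(x,\gamma(t)))$ has variation field $J_t$ along $s\mapsto c(s,t)$ satisfying $J_t(0)=0$ and $J_t(1)=\gamma'(t)$, and a standard symmetry-of-the-connection computation gives $\tfrac{d}{dt}\exp^{-1}(x,\gamma(t)) = J_t'(0)$.

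The heart of the argument is bounding $\|J_t'(0)\|$. Decomposing $J_t$ into tangential and normal parts along the geodesic from $x$ to $\gamma(t)$, the tangential part is curvature-independent and contributes exactly $\|\gamma'(t)\|$, while Rauch's comparison theorem (applicable because $K\le\kmax$ and, by Assumption \ref{A1}, $\ell(t):=\rho(x,\gamma(t))<2r<\tfrac{\pi}{2}\kmax^{-1/2}$ lies strictly before the first conjugate point) bounds the normal part by the constant-curvature distortion $\tfrac{\sqrt{\kmax}\,\ell(t)}{\sin(\sqrt{\kmax}\,\ell(t))}$. I would then invoke the elementary inequality $\tfrac{u}{\sin u}\le 2 - u\cot u$ on $(0,\tfrac{\pi}{2})$, which follows by checking $2\sin u - u\cos u - u\ge 0$, to replace both factors by $2 - h_{\max}(\ell(t),\kmax)$, yielding $\|J_t'(0)\| \le (2 - h_{\max}(\ell(t),\kmax))\,\|\gamma'(t)\|$.

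Finally I would integrate: since $\gamma(t)\in B_r(p_0)$ we have $\|\gamma'(t)\|=\rho(x_n,x_n')\le 2r$ and $\ell(t)<2r$, and as $\ell\mapsto 2 - h_{\max}(\ell,\kmax)$ is increasing, the integrand is largest at $\ell=2r$, giving exactly $2r\,(2 - h_{\max}(2r,\kmax))$; for $\kmax\le 0$ the comparison gives distortion $\le 1 = 2 - h_{\max}$, recovering the Euclidean bound $2r$. I expect the main obstacle to be precisely this comparison step: orienting Rauch's inequality correctly (smaller curvature forces faster Jacobi-field growth, hence a smaller differential of $\exp^{-1}$), handling the tangential/normal splitting of $J_t$, and using Assumption \ref{A1} to certify that all geodesics involved avoid the cut and conjugate loci so that $\sin$ and $\cot$ remain positive and the model-space distortion is a legitimate upper bound.
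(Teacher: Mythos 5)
Your proof is correct and follows essentially the same route as the paper: the identical telescoping reduction to $\frac{1}{n}\left\|\exp^{-1}(x,x_n)-\exp^{-1}(x,x_n')\right\|_x$, followed by a Lipschitz bound of $2-h_{\max}(2r,\kmax)$ on $y\mapsto\exp^{-1}(x,y)$, which the paper simply cites as a Jacobi field estimate (Karcher 1977; Lemma 1 of Reimherr et al.\ 2021) but which you prove in full via the path-integral/Jacobi-field identification and Rauch comparison, with all the domain and conjugate-point requirements correctly certified by Assumption \ref{A1}. The only observation worth adding is that your Rauch argument actually yields the sharper distortion factor $u/\sin u$ (with $u=\sqrt{\kmax}\,\rho(x,\gamma(t))<\pi/2$) before you deliberately relax it to $2-u\cot u$ to match the stated constant, so your route in fact proves a slightly stronger sensitivity bound.
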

\begin{proof}
For two adjacent databases $D\sim D'$ that, without loss of generality, differ in the last element we have that
\begin{align*}
\|\nabla U(x;D)-\nabla{U}(x;D')\|_x &= \|\nabla F(x;D)-\nabla{F}(x;D')\|_x    
= \frac{1}{n} \left\|\exp^{-1} (x,x_n)- \exp^{-1} (x,x_n')\right\|_x \\
&\leq \frac{1}{n}2r(2-h_{\max}(2r,\kmax)),
\end{align*}
based on Jacobi field estimates from \cite{karcher1977riemannian}; see also \cite[Lemma 1]{reimherr2021differential}.
\end{proof}

\begin{remark}\label{remark1}
For the Laplace mechanism  on $\mcM$ with density $f(x;D) \propto e^{-\frac{1}{\sigma}\rho(x,\eta)}$ for a fixed point $\eta$, whenever $r$ is chosen as per Assumption \ref{A1}, the magnitude of inflation of global sensitivity due to curvature was shown to be $2r(2-h_{\max}(2r,\kmax))/(nh_{\max}(2r,\kmax))$ in \cite{reimherr2021differential}. The upper bound obtained by the KNG mechanism is thus strictly smaller for positively curved spaces since $0<h_{\max}(2r,\kmax)<1$; in our experiments (see \ref{ss:Spheres}), this difference in sensitivity appears to result in better utility. Although the densities for the KNG and the Laplace are not the same, they are related to the norm of a vector in $T_x \mcM$: when $\eta=\bar x$, the Fr\'{e}chet mean, since $\rho(x,\bar x)=\|\exp^{-1}(x,\bar x)\|_x$, the Laplace density uses the norm of the Fr\'{e}chet mean $\bar x$ when projected onto $T_x\mcM$; on the other hand, the KNG density uses the norm of the sample mean of data $\{x_i\}$ when projected onto $T_x\mcM$. 
\end{remark}

Our next theorem shows that the utility of KNG on manifolds, as measured by the intrinsic distance $\rho(\tilde x, \bar x)$ matches the optimal rate of $O(d/n\epsilon)$ as in Euclidean space. For the theorem however we require the following Lemma, proof of which is available in the Supplemental materials. 
\begin{lemma}\label{thm:lemma1}
In addition to Assumption \ref{A1}, assume that there exists a $\kmin \in \mathbb R$ that lower bounds the sectional curvatures of $\mcM$. Denote by $\bar x$ the Fr\'echet mean of $D$. For every $x\in B_r(p_0)$,
$$h_{\max}(2r,\kappa_{\max})\rho(\bar x,x) \leq \|\nabla U(x,D)\|_x \leq h_{\min}(2r,\kappa_{\min})\rho(\bar x,x), $$
where
\begin{equation}
\label{hmin}
     h_{\min}(s,\kappa_{\min}):=
\left\{
\begin{array}{cc}
s\sqrt{|\kappa_{\min}|}\coth({s\sqrt{|\kappa_{\min}|}})    ,& \kappa_{\min}<0 \thickspace ; \\
1     ,&  \kappa_{\min} \geq 0 \thickspace.
\end{array}
     \right.
\end{equation}
\end{lemma}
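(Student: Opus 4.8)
The plan is to use two elementary facts and reduce everything to a Hessian comparison. Since $U=-F$ we have $\|\nabla U(x;D)\|_x=\|\nabla F(x;D)\|_x$, and since the Fr\'echet mean $\bar x$ is the unique critical point of $F$ inside $B_r(p_0)$ we have $\nabla F(\bar x;D)=0$. I would then fix $x\in B_r(p_0)$ and work along the minimal geodesic $\gamma:[0,1]\to\mcM$ with $\gamma(0)=\bar x$, $\gamma(1)=x$, parametrised with constant speed so that $\|\gamma'(t)\|_{\gamma(t)}\equiv\rho(\bar x,x)$. Under Assumption \ref{A1} the ball $B_r(p_0)$ is geodesically convex, so $\gamma$ stays inside it and every distance $\rho(x_i,\gamma(t))$ is at most the diameter $2r$; this keeps us strictly inside the cut locus of each $x_i$, where the squared-distance function is smooth.

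The central ingredient is a uniform two-sided Hessian bound. I would invoke the Jacobi-field (Rauch) comparison estimates of \cite{karcher1977riemannian} for $x\mapsto\tfrac12\rho(x_i,x)^2$: its Hessian equals $1$ in the radial direction and, orthogonally, is bounded below by $s\sqrt{\kmax}\cot(s\sqrt{\kmax})$ when curvatures are $\le\kmax$ and above by $s\sqrt{|\kmin|}\coth(s\sqrt{|\kmin|})$ when curvatures are $\ge\kmin$, with $s=\rho(x_i,x)$. Because $s\sqrt{\kmax}\cot(s\sqrt{\kmax})\le 1\le s\sqrt{|\kmin|}\coth(s\sqrt{|\kmin|})$, the radial eigenvalue $1$ respects both bounds; combining this with the monotonicity of the two comparison functions and $s\le 2r$, then averaging over $i$, yields as symmetric forms on $T_x\mcM$ for every $x\in B_r(p_0)$
\[
h_{\max}(2r,\kmax)\,\langle\cdot,\cdot\rangle_x \;\le\; \mathrm{Hess}\,F(x;D)\;\le\; h_{\min}(2r,\kmin)\,\langle\cdot,\cdot\rangle_x,
\]
so $\mathrm{Hess}\,F$ is positive definite with operator norm at most $h_{\min}(2r,\kmin)$.

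For the upper bound I would parallel-transport the gradient back to $\bar x$: writing $P_t$ for parallel transport along $\gamma$ to $T_{\bar x}\mcM$ (an isometry) and using $\tfrac{D}{\diff t}\nabla F=\nabla_{\gamma'}\nabla F$, the fundamental theorem of calculus together with $\nabla F(\bar x)=0$ gives
\[
\|\nabla F(x)\|_x = \Bigl\|\int_0^1 P_t\,\nabla_{\gamma'}\nabla F(\gamma(t))\,\diff t\Bigr\| \le \int_0^1 \bigl\|\nabla_{\gamma'}\nabla F\bigr\|\,\diff t \le h_{\min}(2r,\kmin)\int_0^1\|\gamma'\|\,\diff t = h_{\min}(2r,\kmin)\,\rho(\bar x,x),
\]
where the penultimate step uses the operator-norm bound. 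For the lower bound I would instead track the scalar $t\mapsto\langle\nabla F(\gamma(t)),\gamma'(t)\rangle$, whose derivative is exactly $\mathrm{Hess}\,F(\gamma',\gamma')$ (the other term drops since $\gamma$ is a geodesic); integrating from its value $0$ at $t=0$ gives
\[
\langle\nabla F(x),\gamma'(1)\rangle=\int_0^1\mathrm{Hess}\,F(\gamma',\gamma')\,\diff t \ge h_{\max}(2r,\kmax)\int_0^1\|\gamma'\|^2\,\diff t = h_{\max}(2r,\kmax)\,\rho(\bar x,x)^2,
\]
and Cauchy--Schwarz with $\|\gamma'(1)\|=\rho(\bar x,x)$ then yields $\|\nabla F(x)\|_x\ge h_{\max}(2r,\kmax)\,\rho(\bar x,x)$.

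I expect the main obstacle to be the Hessian comparison step rather than the integrations, which are routine. One must verify that the Jacobi-field estimates apply uniformly over $B_r(p_0)$ (staying inside the cut locus, which Assumption \ref{A1} secures), handle the radial-versus-orthogonal eigenvalue splitting so that the eigenvalue $1$ is absorbed into \emph{both} bounds, and exploit monotonicity of $s\mapsto s\sqrt{\kmax}\cot(s\sqrt{\kmax})$ and $s\mapsto s\sqrt{|\kmin|}\coth(s\sqrt{|\kmin|})$ to replace the varying $\rho(x_i,x)$ by the single worst case $2r$. The lower curvature bound $\kmin$ is precisely what is needed (beyond Assumption \ref{A1}) to make the upper Hessian estimate finite, which is why it is added as a hypothesis.
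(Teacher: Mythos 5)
Your proof is correct and follows essentially the same route as the paper's: both establish the two-sided Hessian bound $h_{\max}(2r,\kmax)\,\mathbb{I}_d \preccurlyeq \nabla^2 F(x;D) \preccurlyeq h_{\min}(2r,\kmin)\,\mathbb{I}_d$ on $B_r(p_0)$ and then convert it into gradient-norm bounds by exploiting $\nabla F(\bar x;D)=0$, parallel transport along the geodesic from $\bar x$ to $x$, and Cauchy--Schwarz for the lower bound. The only difference is presentational: you unpack the Hessian bounds directly from Karcher's Jacobi-field comparison, whereas the paper cites the equivalent strong-geodesic-convexity and Lipschitz-gradient lemmas of Alimisis et al.\ and Zhang--Sra.
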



\begin{theorem}
Assume the setting of Lemma \ref{thm:lemma1}. Let $\tilde{x}$ denote a draw from the KNG mechanism restricted to $B_r(p_0)$, and $\bar{x}$ be the unique global optimizer of $U(x;D)$. We have that
$$E\left[\rho(\tilde{x},\bar{x})^2\right]=O\left(\frac{d^2}{n^2\epsilon^2}\right).$$
\end{theorem}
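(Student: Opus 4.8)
The plan is to write the utility as a ratio of integrals against the unnormalised density and to sandwich both pieces using Lemma \ref{thm:lemma1}. Writing $\beta = 1/\sigma$ and $\rho(x) := \rho(\bar x, x)$, and recalling that $\|\nabla U(x;D)\|_x = \|\nabla F(x;D)\|_x$ is exactly the quantity in the exponent of the sampling density, we have
\[
\E[\rho(\tilde x,\bar x)^2] = \frac{\int_{B_r(p_0)} \rho(x)^2\, e^{-\beta\|\nabla U(x;D)\|_x}\,\diff\mu}{\int_{B_r(p_0)} e^{-\beta\|\nabla U(x;D)\|_x}\,\diff\mu}.
\]
First I would bound the numerator from above using the lower bound $\|\nabla U(x;D)\|_x \geq h_{\max}(2r,\kmax)\,\rho(x)$ and bound the denominator (the normalising constant) from below using the upper bound $\|\nabla U(x;D)\|_x \leq h_{\min}(2r,\kmin)\,\rho(x)$, both supplied by Lemma \ref{thm:lemma1}. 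This reduces the estimate to a comparison of two Laplace-type integrals in the single radial variable $\rho$.

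Second, I would pass to geodesic polar coordinates centred at $\bar x$, writing the volume form as $A(\rho,\omega)\,\diff\rho\,\diff\omega$. Over the bounded region (where $\rho \leq 2r$ by the triangle inequality, since $\bar x \in B_r(p_0)$) the Günther--Bishop volume comparison bounds $A(\rho,\omega)$ between $c\,\rho^{d-1}$ and $C\,\rho^{d-1}$ for constants $c,C$ depending only on $r$, $\kmax$, $\kmin$, and $d$. Using the upper bound in the numerator, it is at most a constant multiple of $\int_0^{\infty}\rho^{d+1}e^{-\beta h_{\max}\rho}\,\diff\rho = \Gamma(d+2)/(\beta h_{\max})^{d+2}$; using the lower bound in the denominator over a fixed sub-ball $B_{\rho_0}(\bar x) \subseteq B_r(p_0)$ (with $\rho_0 > 0$ since $\bar x$ is interior to the ball by the convexity in Assumption \ref{A1}), it is at least a constant multiple of $\int_0^{\rho_0}\rho^{d-1}e^{-\beta h_{\min}\rho}\,\diff\rho$.

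Third, because $\sigma = 2\Delta/\epsilon$ with $\Delta = O(r/n)$ from Theorem \ref{thm:sensitivity}, we have $\beta = 1/\sigma \to \infty$; in the regime $n\epsilon \gtrsim d$ the integrand $\rho^{d-1}e^{-\beta h_{\min}\rho}$ peaks inside $[0,\rho_0]$, so the truncated denominator is at least a fixed fraction of the complete integral $\Gamma(d)/(\beta h_{\min})^d$. Forming the ratio then gives
\[
\E[\rho(\tilde x,\bar x)^2] \;\lesssim\; \frac{\Gamma(d+2)}{\Gamma(d)}\,\frac{(\beta h_{\min})^d}{(\beta h_{\max})^{d+2}} \;=\; d(d+1)\,\frac{h_{\min}^d}{h_{\max}^{d+2}}\,\sigma^2,
\]
and substituting $\sigma = O(1/(n\epsilon))$ (with $r$ fixed) yields the claimed $O(d^2/(n^2\epsilon^2))$ rate once the geometric constants and the fixed dimension $d$ are absorbed into the implied constant.

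The hardest part will be controlling the volume density $A(\rho,\omega)$ and the truncation to the bounded support $B_r(p_0)$ \emph{simultaneously}: bounding the normalising constant below by the correct $\Gamma(d)/(\beta h_{\min})^d$ scaling requires that the mechanism's mass genuinely concentrates near $\bar x$, which holds only once $\beta$ is large relative to $d/\rho_0$ (i.e.\ $n\epsilon$ is large relative to $d$) and once $\bar x$ sits a definite distance inside $B_r(p_0)$. The mismatch between the two comparison rates $h_{\max}$ and $h_{\min}$ likewise produces a factor $(h_{\min}/h_{\max})^d$ that, while constant for fixed geometry and dimension, must be tracked carefully and absorbed into the $O(\cdot)$ constant.
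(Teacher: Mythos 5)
Your proposal is correct and follows essentially the same route as the paper: sandwich the exponent via Lemma \ref{thm:lemma1}, pass to coordinates about $\bar x$ with a two-sided volume (Jacobian) comparison, and reduce the ratio to Gamma integrals giving $\Gamma(d+2)/\Gamma(d)\cdot\sigma^2 = O(d^2/(n^2\epsilon^2))$. If anything you are more careful than the paper on two points it glosses over: the lower bound on the \emph{truncated} normalising constant (the paper simply replaces the bounded set $S_{\bar x}$ by $\mathbb R^d$ in the denominator, which on its own goes the wrong way and needs exactly the concentration argument you supply), and the $d$-dependent exponents in the curvature factor $h_{\min}^{d}/h_{\max}^{d+2}$, which the paper writes as $h_{\min}/h_{\max}^{3}$.
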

\begin{proof}
Recall that under Assumption \ref{A1}, $f(x;D) = C^{-1}_{\sigma}
\exp\left\{-\sigma^{-1}\left\| U(x;D) \right\|_x  \right\}$ for a finite normalizing constant $C^{-1}_{\sigma}$. 
Then
$$C_{\sigma} \geq 
\int_{B_r(p_0)} \exp\left\{ - \sigma^{-1} h_{\text{min}}(2r,\kmin) \rho( x, \bar{x}) \right\} \diff \mu(x),
$$ which due to Lemma \ref{thm:lemma1} results in the upper bound

\begin{equation}\label{ratio}
E\left[\rho(\tilde x, \bar x)^2\right] \leq
\frac{\int_{B_{r}(p_0)}\rho(x,\bar x)^2e^{-\frac{1}{\sigma}h_{\max}(2r,\kappa_{\max})\rho(x,\bar x)}\diff\mu(x)}{\int_{B_{r}(p_0)}e^{-\frac{1}{\sigma}h_{\min}(2r,\kappa_{\min})\rho(x,\bar x)}\diff\mu(x)}.
\end{equation}

Since $r \leq \frac{1}{2} \text{inj}M$, there exists a unique $v_{x} \in \exp^{-1}(\bar{x},(B_{r}(p_0)) \cap \bm B_{2r}(\bar x)$ such that $v_{x}=\exp^{-1}(\bar {x},x)$, where $\bm B_{2r}(\bar x)=\{v \in T_{\bar x}M:\|v\|_{\bar x} < 2r\}$ is the open ball of radius $2r$ centred at the origin in $T_{\bar x}M$. Denote by $S_{\bar x}$ the bounded subset $\exp^{-1}(\bar {x},(B_{r}(p_0)) \cap \bm B_{2r}(\bar{x})$ of $T_{\bar x}M$ with compact closure. With respect to the pushforward of $\diff\mu$ 
on to $T_{\bar x}M$ under the inverse exponential map at $\bar x$, the ratio in \eqref{ratio} equals
\[
\frac{\int_{S_{\bar x}} \|v_{x}\|^2_{\bar x} e^{-\frac{1}{\sigma}h_{\max}(2r,\kappa_{\max}) \|v_{ x}\|_{\bar x}} \diff(\mu\circ \exp({\bar x},\cdot))(v_{x})}
{\int_{S_{\bar x}} e^{-\frac{1}{\sigma}h_{\min}(2r,\kappa_{\min}) \|v_{x}\|_{\bar x}} \diff(\mu\circ \exp({\bar x},\cdot))(v_{x})}.
\]
The induced measure $\diff(\mu\circ \exp({\bar x},\cdot))$ on $T_{\bar x}M$ can be extended to $\mathbb R^d$ by settings its value on the complement of $S_{\bar x}$ to be zero. It is then absolutely continuous with respect to the Lebesgue measure $\text{d}\lambda$ on $\mathbb R^d$ with a Jacobian determinant uniformly bounded above and below, respectively, by constants $c_1$ and $c_2$ on (the closure) of $S_{\bar x}$.  This ensures that the above ratio is upper bounded by

\[
\frac{c_1}{c_2}\frac{\int_{\mathbb R^d} \|v_{x}\|^2_{\bar x} e^{-\frac{1}{\sigma}h_{\max}(2r,\kappa_{\max}) \|v_{x}\|_{\bar x}} \diff\lambda(v_{x})}
{\int_{\mathbb R^d} e^{-\frac{1}{\sigma}h_{\min}(2r,\kappa_{\min}) \|v_{x}\|_{\bar x}} \diff\lambda(v_{x})},
\]
which, with a change of variables and using spherical coordinates, equals
\[
\sigma^2\frac{c_1}{c_2}\frac{h_{\min}(2r,\kappa_{\min})}{h_{\max}(2r,\kappa_{\max})^3} \left[\int_{0}^\infty z^{d-1}e^{-z}\diff \lambda(z)\right]^{-1} \int_{0}^\infty z^{d+1} e^{-z}\diff \lambda(z)=O\left(\frac{d^2}{n^2\epsilon^2}\right),
\]
since the curvature-dependent terms $h_{\min}(2r,\kappa_{\min})$ and $h_{\max}(2r,\kappa_{\max})$, defined in \eqref{hmax} and \eqref{hmin}, are both positive and finite under Assumption \ref{A1}, $\sigma$ is as in Theorems \ref{thm:1} and \ref{thm:sensitivity}, and the integrals in the numerator and denominator equal $(d+1)!$ and $(d-1)!$, respectively.
\end{proof}

\section{Examples}\label{sec:examples}
In this Section, we consider two simulated examples and a real data example on 2D shapes. For the former, we consider the positively curved unit $d$-sphere and the set of symmetric positive definite matrices (SPDM), which when equipped with an affine invariant metric is negatively curved. 
\subsection{Spheres}\label{ss:Spheres}
Let $\mcS^d_{\kappa}$ denote the $d$-dimensional sphere with radius $\kappa^{-1/2}$. The sphere equipped with the induced metric from $\mbR^{d+1}$, the canonical metric, has constant positive curvature $\kappa$. The tangent space at $p\in\mcS^d_{\kappa}$ is then $T_p\mcS^d_{\kappa}=\{v\in\mbR^{d+1} |\langle v,p\rangle = 0  \}$. The exponential map, defined on all of $T_p\mathcal S_\kappa^{d}$, is given by
$ \exp(p,v) = \cos(\|v\|) p + \kappa^{-1/2} \sin(\|v\|) v/\|v\|$ and $\exp(p,\bm 0)=p$. The set of points at a distance at least $\pi$ from $p$ constitutes its cut locus, which then is the singleton $\{-p\}$. With $\theta:=\rho(p,q)= \cos^{-1}(\langle p,q\rangle)$, the inverse exponential map $\exp^{-1}(p,q) = \theta(\sin(\theta))^{-1} (q-\cos(\theta) p)$ at $p$ is hence defined only within the open ball around $p$ with radius $\pi/(2\kappa^{1/2})$.

Next, we consider the utility of KNG on a manifold and compare it to other sanitization techniques. We generate random samples $D$ from $S_1^2$ and compute the Fr\'echet mean $\bar{x}$, both as described in the Supplemental material. We set $\epsilon=1$ and sanitize $\bar{x}$ with three separate methods; first with the proposed method KNG on manifolds to produce $\tilde{x}_{KNG}$, second with the Laplace on manifolds as in \cite{reimherr2021differential} to produce $\tilde{x}_{L}$, and lastly embedding $\bar{x}$ into $\mbR^3$ and privatizing with the Euclidean Laplace to produce $\tilde{x}_{E}$. The latter almost surely
will not be on the sphere, however, since the privacy guarantees are invariant to post-processing, we project the estimate back onto the sphere by normalizing as $\tilde{x}_E\rightarrow \tilde{x}_E/\|\tilde{x}_E\|$. 

We compute several such replicates at different sample sizes and display the utility comparison in the first (left) panel of Figure \ref{fig:Utility_Mean_Sphere}, where the utility is measured using the  average Euclidean distance $\|\bar{x}-\tilde{x}\|$ 
such that $\tilde{x}$ is a the respective sanitized estimate. 
We see that adding noise in the ambient space with the Euclidean Laplace adds the most noise, which may be attributed to the need to sanitize over an extra dimension. After post-processing $\tilde{x}_E$ by projecting onto the sphere, the Euclidean Laplace mechanism does have better utility than its manifold counterpart, but this is not unexpected since the sensitivity of the manifold Laplace for positively curved manifolds is inflated compared to the Euclidean rate of $2r/n$ \cite{reimherr2021differential}. Lastly, our proposed mechanism has the best utility in this comparison which may be attributed to its sensitivity being strictly less than the sensitivity of the Laplace on the manifold for positively curved manifolds. Further, our approach will always produce a private summary which is on the manifold and does not require any post-processing.

\begin{figure}
\centering
    \begin{tabular}{cc}
         \includegraphics[height = 1.65in]{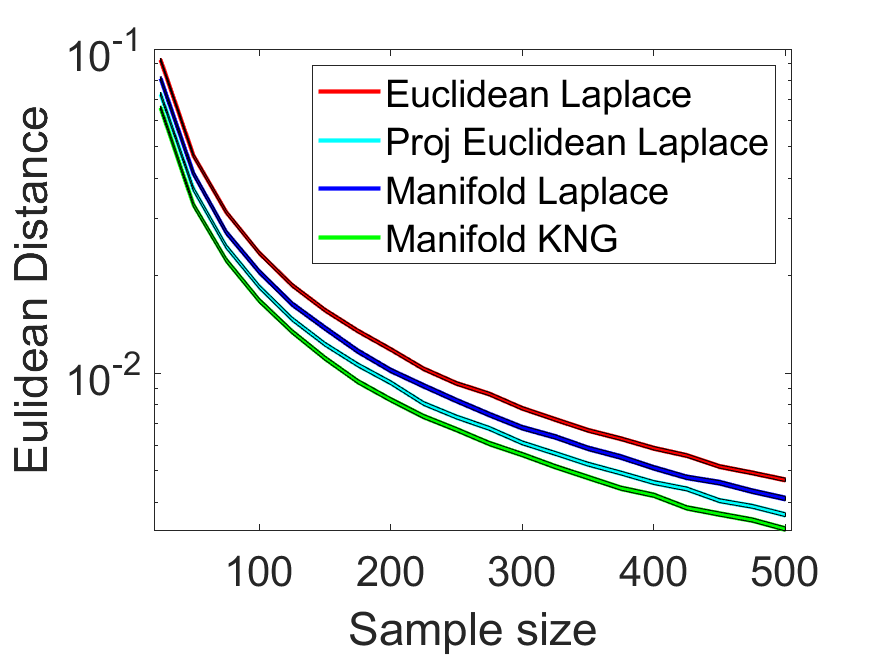}
         &  
         \includegraphics[height = 1.65in]{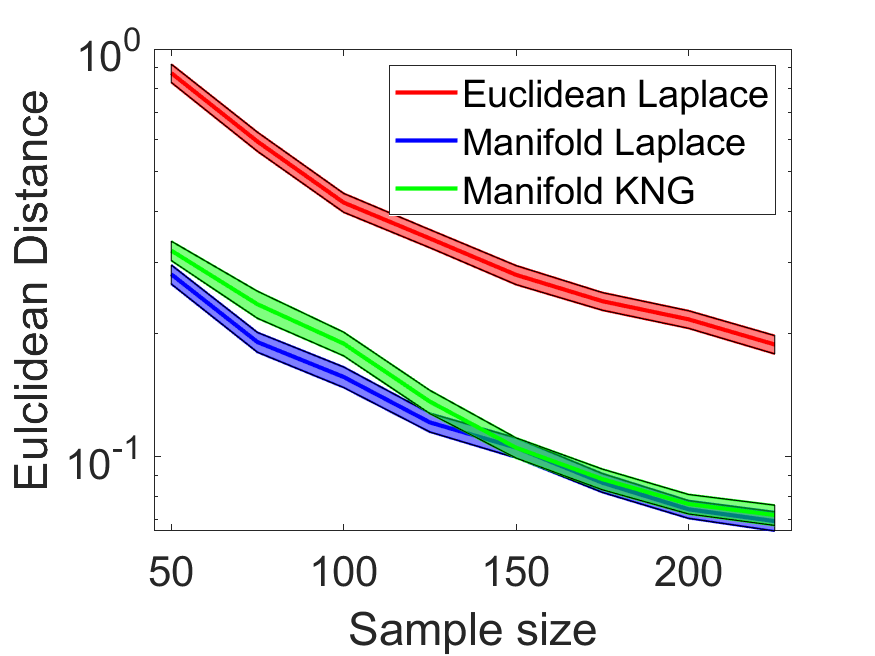}
    \end{tabular}
    \caption{Utility measured using 
    average Euclidean distance between the Fr\'echet mean $\bar x$ and its sanitized version $\tilde x$, when $\mcM$ is the unit sphere $\mathcal S^2_1$ in two dimensions (left) and  $k\times k$ SPD matrices $\mathbb P(k)$ (right), under the following frameworks: (i) Manifold KNG; (ii) Euclidean Laplace by embedding $\bar x$ into the ambient space; (iii) Manifold Laplace on the manifold; and additionally, with (iv) Projected Euclidean Laplace for the unit sphere. For the Euclidean Laplace $\bar x$ was embedded into $\mathbb R^3$ for $\mathcal S^2_1$ and within $k \times k$ symmetric matrices for $\mathbb P(k)$. 
    For each sample size, 10000 replicates were used for $\mathcal S^2_1$, whereas 500 were used for $\mathbb P(k)$. Shaded regions represent the average distance $\pm2\text{SE}$, where the Euclidean distance for $\mathbb P(k)$ is $\|\vech(\bar{x})-\vech(\tilde{x})\|$. 
    }
    
    \label{fig:Utility_Mean_Sphere}
\end{figure}
\subsection{Symmetric positive-definite matrices}\label{ss:SPDM}

Denote by $\mbPk$ the $k(k+1)/2-$dimensional manifold of $k\times k$ symmetric positive-definite matrices equipped with the affine-invariant Rao-Fisher metric $\langle v , u\rangle_p = \text{Tr}(p^{-1} u p^{-1} v) $, where $u,v\in T_p\mbPk=Sym_k$ are symmetric matrices. With this Riemannian metric $\mbPk$ has negative sectional curvature everywhere with exponential map $\exp(p,v)=p^{1/2}\text{Exp}\left(p^{-1/2}vp^{-1/2}\right)p^{1/2}$ and globally defined inverse exponential map $\exp^{-1}(q,p) = q^{1/2}\text{Log}\left(q^{-1/2}p q^{-1/2}\right)q^{1/2}$, where $\text{Exp}(\cdot)$ and $\text{Log}(\cdot)$ are the matrix exponential and logarithm, respectively. The distance between $q$ and $p$ in $\mbPk$ is thus $\rho(q,p)=\|\exp^{-1}(q,p)\|_q =\text{Tr}[\text{Log}(q^{-1/2}pq^{-1/2})^2]^{1/2}$.

For the simulations we set $k=2$. We generate random samples $D\subset \mbPk$ by sampling from the Wishart distribution as discussed in the Supplemental materials. We compute the Fr\'echet mean $\bar{x}$ and sanitize it with three separate approaches: (i) we generate a private mean $\tilde{x}_{KNG}$ by sanitizing on $\mbPk$ using the proposed approach KNG, (ii) we generate a private mean $\tilde{x}_{L}$ on $\mbPk$ with the Laplace distribution as in \cite{reimherr2021differential,hajri2016riemannian}, and (iii) we embed $\bar{x}$ into $Sym_k$ the space of symmetric matrices, represent $\bar{x}$ as a vector $\text{vech}(\bar{x})\in\mbR^3$, sanitize by sampling from the Euclidean Laplace to produce $\text{vech}(\tilde{x}_E)$, and lastly we revert the vectorization to obtain $\tilde{x}_E$. There is no guarantee that $\tilde{x}_E$ will remain in $\mbPk$ and further without a unique projection to $\mbPk$ since it is an open cone within $\mathbb R^{k(k+1)/2}$. 

Similarly to Section \ref{ss:Spheres}, in the second panel of Figure \ref{fig:Utility_Mean_Sphere} we display an average utility comparison of the privatization techniques over 500 replicates with respect to the distance $\|\text{vech}(\bar{x})-\text{vech}(\tilde{x})\|$, where $\tilde x$ is the sanitized estimate corresponding to one of the three approaches. 
We see that our approach has better utility compared to the Euclidean approach and comparable utility to the Laplace on $\mbPk$. The latter is not entirely surprising since the Laplace is equivalent to KNG for mean estimation in Euclidean space \citep{NEURIPS2019_faefec47}. Sampling from the Laplace on $\mbPk$ is fairly simple since it was thoroughly studied by \cite{hajri2016riemannian} and has nearly a closed form sampler; sampling from KNG on $\mbPk$ however is not as straightforward and we employed an Metropolis-Hastings algorithm which may account for its inconsistent behavior compared to the Laplace. However, our proposed method has better utility than the Euclidean approach, which is designed on the higher-dimensional ambient space.

\subsection{Kendall's 2D shape space}\label{ss:K2D}
Statistical shape analysis is a relatively recent field dating back to the seminal paper by \cite{thompson1942growth} where shapes of animals, such as fish, were shown to differ in geometric transformations such as a shear. Since this conception there have been many branches of shape analysis that have arisen such as Kendall's shape space \citep{kendall1984shape}, large deformation diffeomorphic metric mapping (LDDMM) \citep{GrenanderMiller}, and elastic shape analysis \citep{srivastava2016functional}. No matter the choice, shape analysis has demonstrated to be widely applicable in the medical field (\cite{ercan2012statistical,li2014shape}), computer vision \citep{jimenez2000survey,sharon20062d}, and functional data analysis (\cite{harris2021elastic,zhang2020shape}). By ``shape" of an object in two dimensions, we refer to the intrinsic geometric property of a set of points on the plane (representing the object) that remains unchanged under 
 similarity transformations such as translation, rotation and scale \cite{kendall1984shape}, and additionally on reparameterisations if an outline curve representation is used \cite{srivastava2016functional}. Of the many areas of shape analysis available (for e.g., the theory of deformable templates \citep{trouve2005local} and Large deformation diffeomorphic metric mapping  \citep{GrenanderMiller}) we consider the Kendall shape space of two-dimensional landmark configurations \cite{kendall1984shape}. 
 
Consider a set $x=\{x_j\} \in \mbC^k$ of labelled $k$ points on a 2D object, known as landmarks, in the complex plane. If the object has been extracted from a densely sampled outline curve (for e.g., when extracted/segmented from a 2D image), then the parameterisation of the curve induces the labelling through an ordering of the points. Labelling thus establishes a correspondence between points on different objects, and is considered to be fixed.     

The shape of $x$ is what remains once translation, scaling and rotation variabilities are removed or accounted for. Translation is removed by by transforming $x \to x-\frac{1}{k}\sum_{j=1}^k x_j$ resulting in the complex $(k-1)-$dimensional hyperplane $\mcC=\{x \in \mathbb C^k\backslash 0|\frac{1}{k}\sum x_j=0\}$. 
Scaling and rotation of $x$ amounts to multiplying by a complex number $re^{\text{i}\theta}$. The shape of $x$ then can be considered as the curve $(-\pi,\pi]\ni\theta \mapsto e^{\text{i}\theta}u$ on the complex unit ($k-1$)-dimensional sphere $\mathbb C S^{k-1}$ in $\mcC$, where $u=x/\|x\|\in \mcC$ (or the real sphere of dimension $2k-3$), $\|x\|=\sqrt{x^* x}$ and $x^*$ the complex conjugate of $x$. The shape space is thus identified with the compact complex projective space $\mathbb CP^{k-2}$ of dimension $k-2$ following the scaling $x \to x/\|x\|, x \in \mcC$. Therefore, the geodesic shape distance between landmark configurations $x$ and $y$ with corresponding centred and scaled versions $p$ and $q$ is
$\rho(x,y)=\inf_{\theta \in (\pi,\pi]}\cos^{-1}\left(|e^{-\text{i}\theta}pq^*|\right)$; thus, the injectivity radius of the shape space is $\pi/2$. 

Let $x$ be a centred and scaled configuration. Minimizing unit-speed geodesics starting at $x$ and initial velocity $v$ in the shape space are isometrically identified with unit-speed geodesics $(-\pi/2,\pi/2] \ni s \mapsto x \cos (s)+v \cos \sin(s)$ on $\mathbb C S^{k-1}$ wherein $v$ satisfies $v\bm 1_k=0$ in addition to $xv^*0$, and $\bm 1_k$ is the vector of ones\citep[Chapter 6]{ShapeBook}; such geodesics are known as `horizontal' geodesics. Consequently, the exponential map on the shape space is given by 
the corresponding one on the sphere with the additional condition on the velocity vectors. 
The inverse exponential map at $x$ exists within a ball of radius smaller than $\pi/2$ in the shape space, and is thus given by $\exp^{-1}(x,y)=\theta\|y-\text{Proj}_x(y)\|$, where $\text{Proj}_x(y):=x (y^*x)$ is the projection of $y$ onto $x$ and $\theta = \cos^{-1}|x^*y|$. 
The (complex) holomorphic sectional curvature of the 2D Kendall's shape space is constant and equals 4 \citep{ShapeBook}. 

\begin{figure}
    \centering
    \begin{tabular}{@{}c@{} @{}c@{} @{}c@{}}
        \begin{tabular}{|c|c|}
        \hline
            {\includegraphics[trim = 40 5 20 5, clip, height=1.5in]{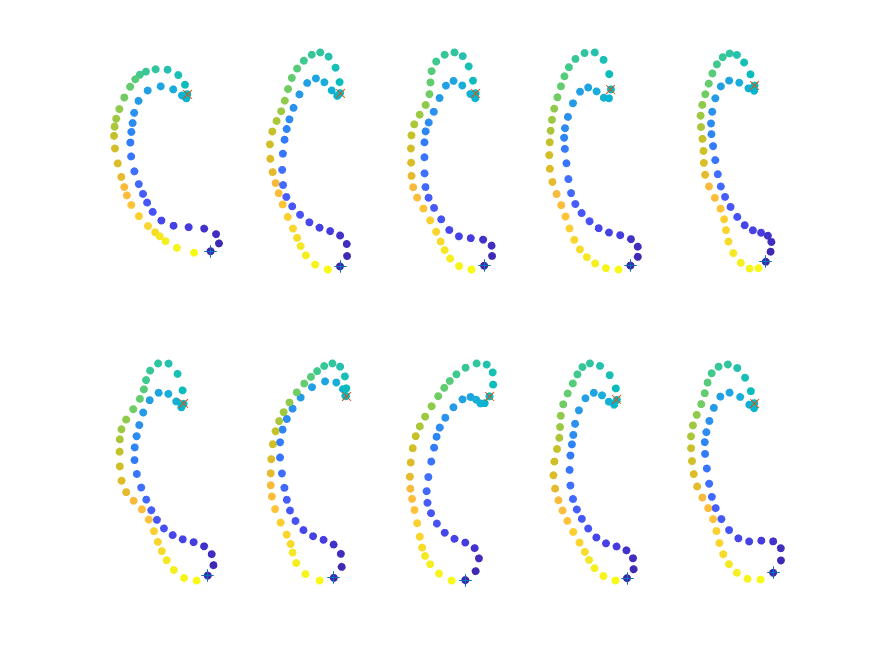}} &  
            {\includegraphics[trim = 150 15 120 5, clip, height=1.5in]{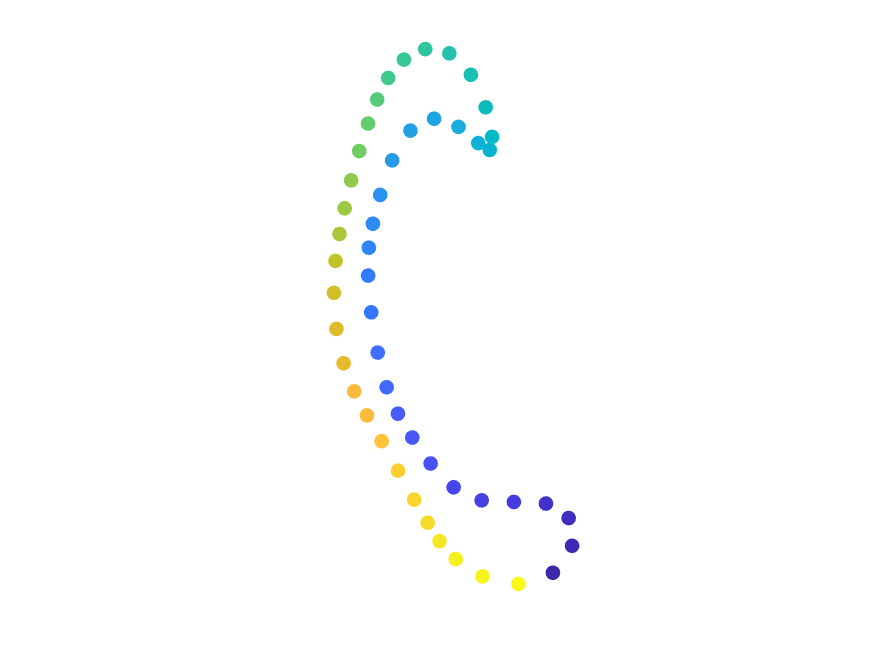}} \\
        \hline
        \end{tabular}
        
        \begin{tabular}{|@{}c@{} @{}c@{} @{}c@{} @{}c@{} @{}c@{} @{}c@{} |}
        \hline
            {\includegraphics[trim = 150 15 120 5, clip, height=.6in]{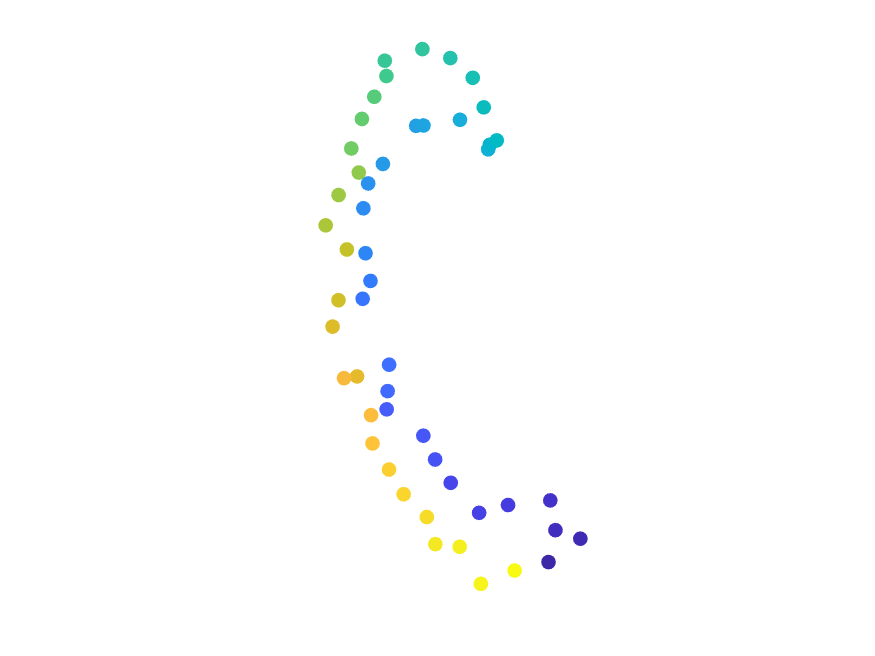}} &
            {\includegraphics[trim = 150 15 120 5, clip, height=.6in]{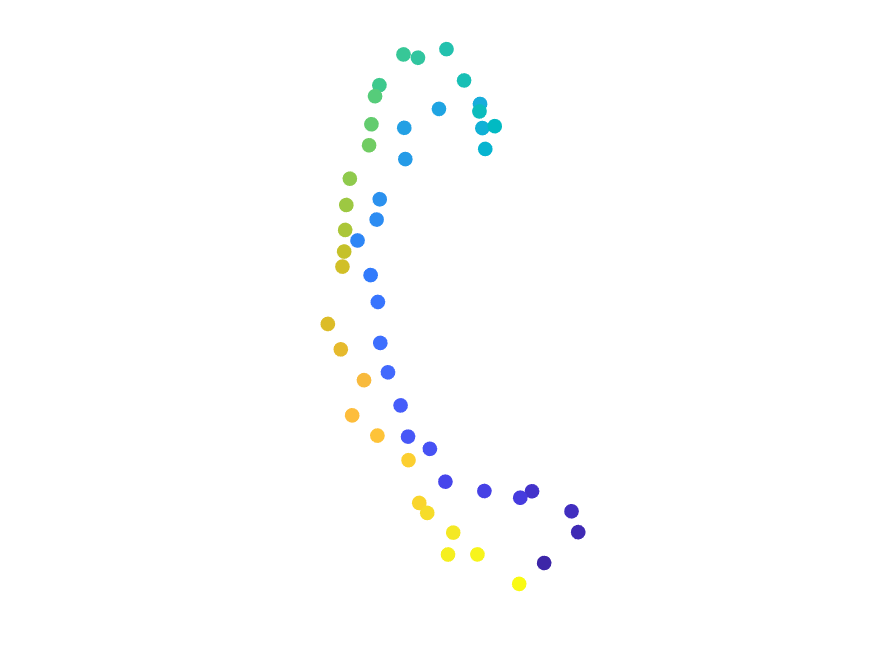}} &
            {\includegraphics[trim = 150 15 120 5, clip, height=.6in]{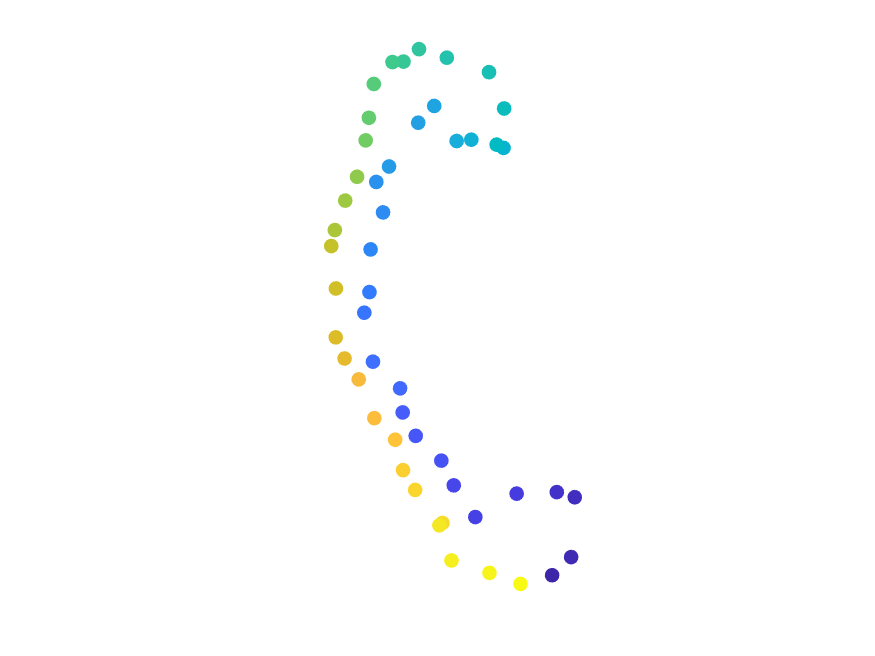}} &
            {\includegraphics[trim = 150 15 120 5, clip, height=.6in]{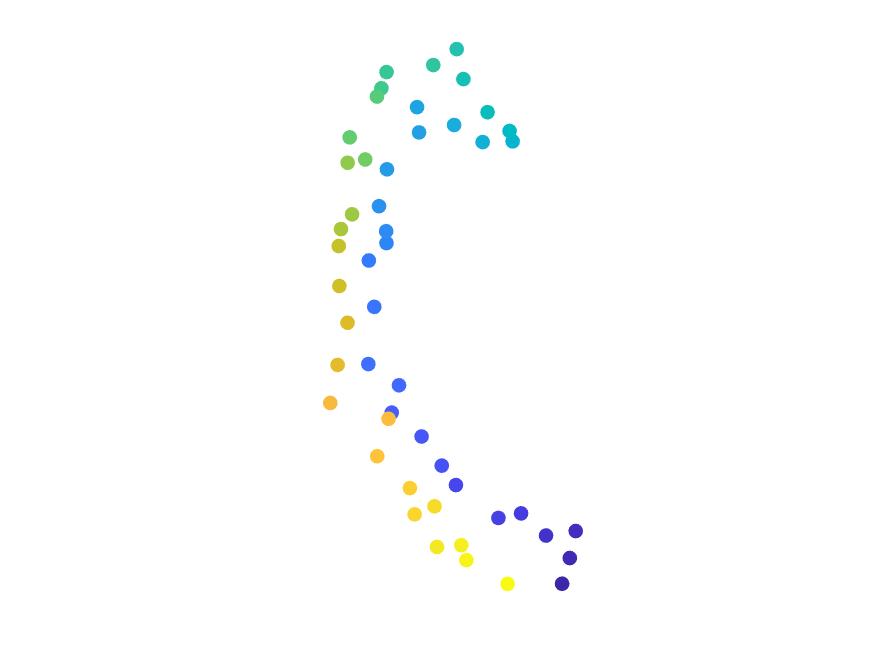}} &
            {\includegraphics[trim = 150 15 120 5, clip, height=.6in]{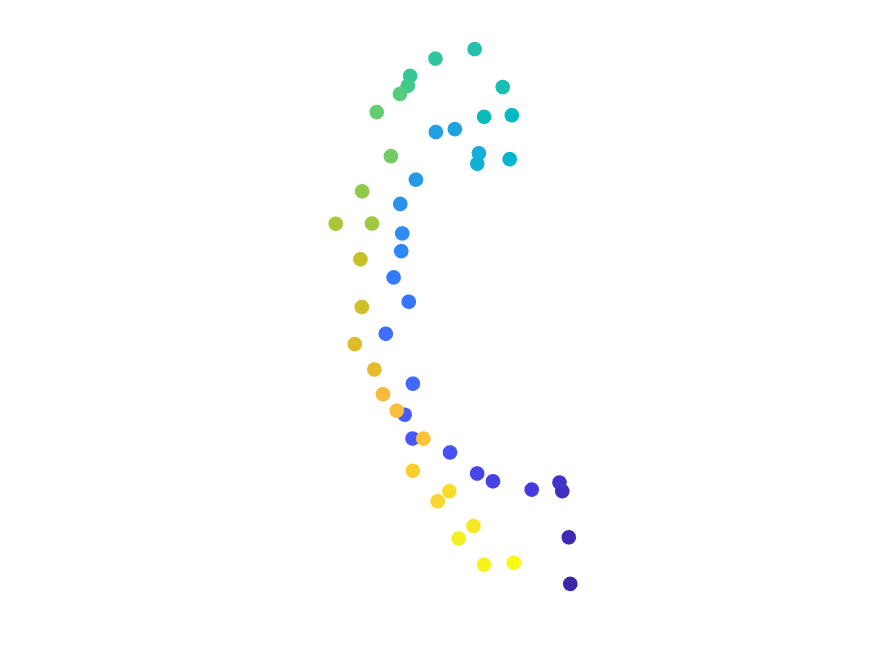}} &
            {\includegraphics[trim = 150 15 120 5, clip, height=.6in]{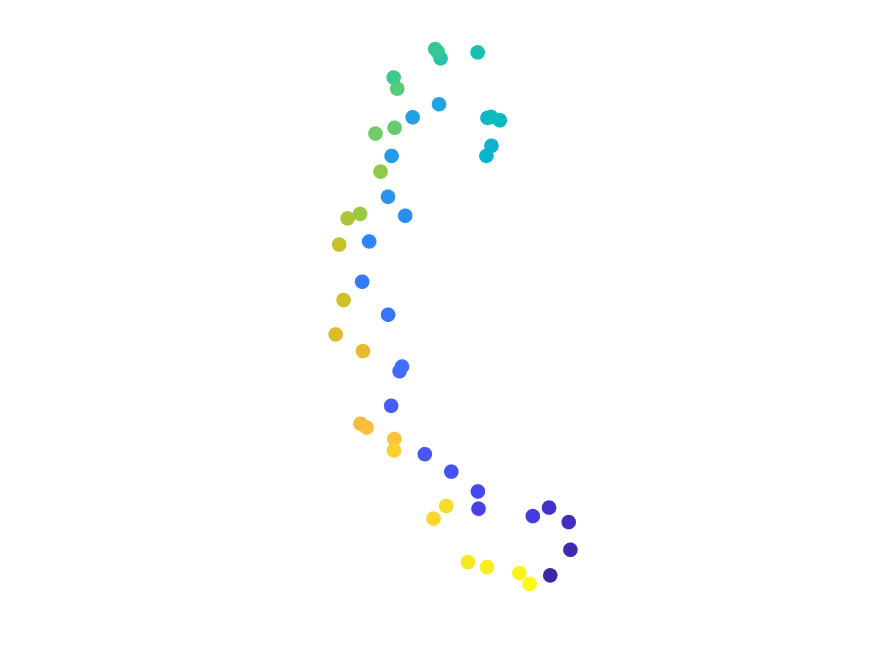}} \\\hline
            
            {\includegraphics[trim = 150 15 120 5, clip, height=.6in]{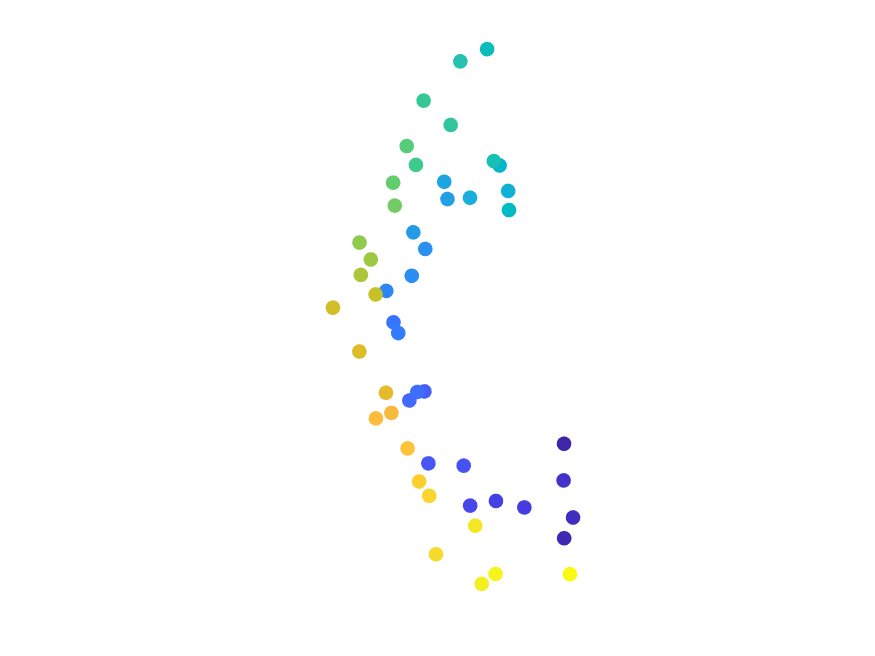}} &
            {\includegraphics[trim = 150 15 120 5, clip, height=.6in]{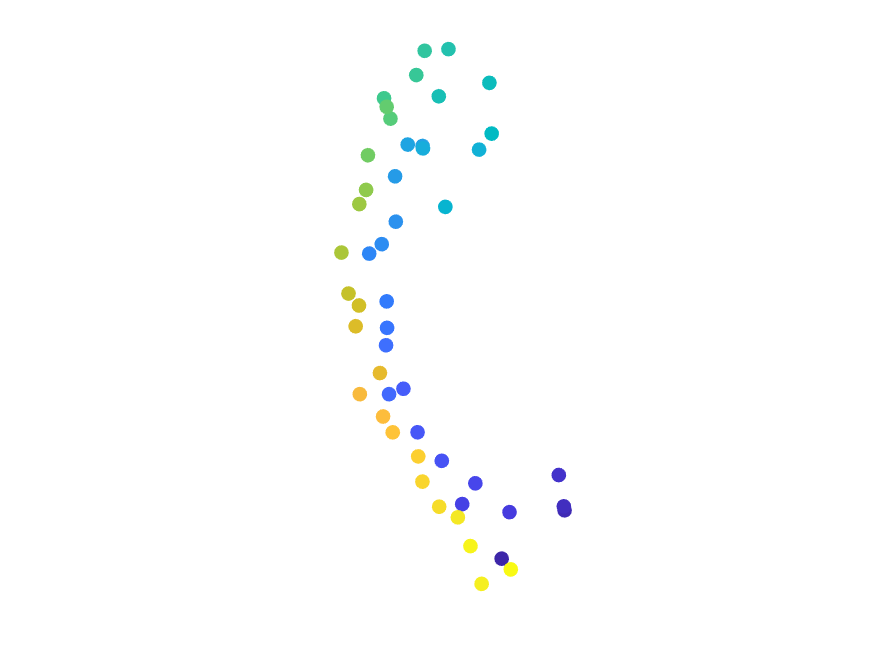}} &
            {\includegraphics[trim = 150 15 120 5, clip, height=.6in]{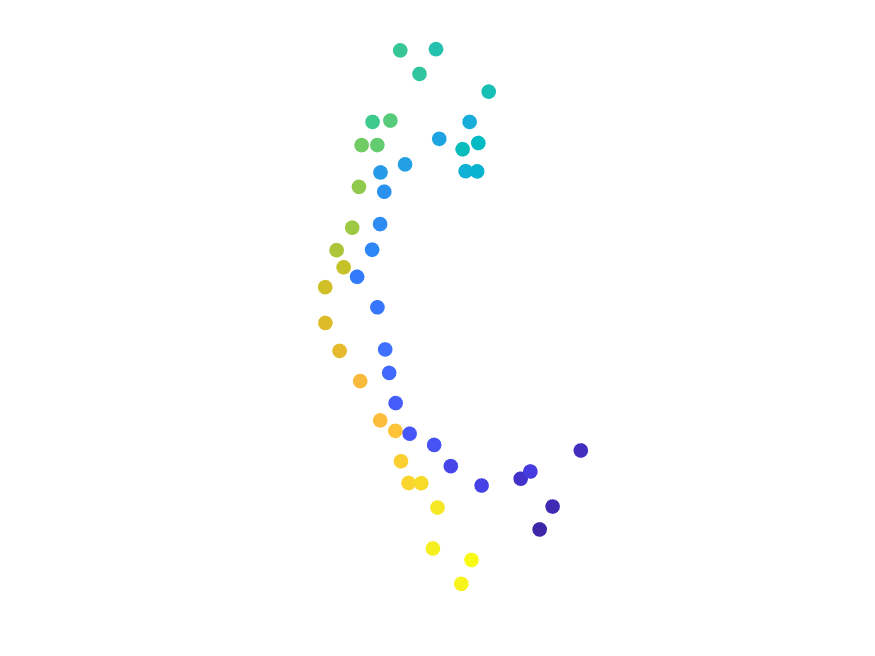}} &
            {\includegraphics[trim = 150 15 120 5, clip, height=.6in]{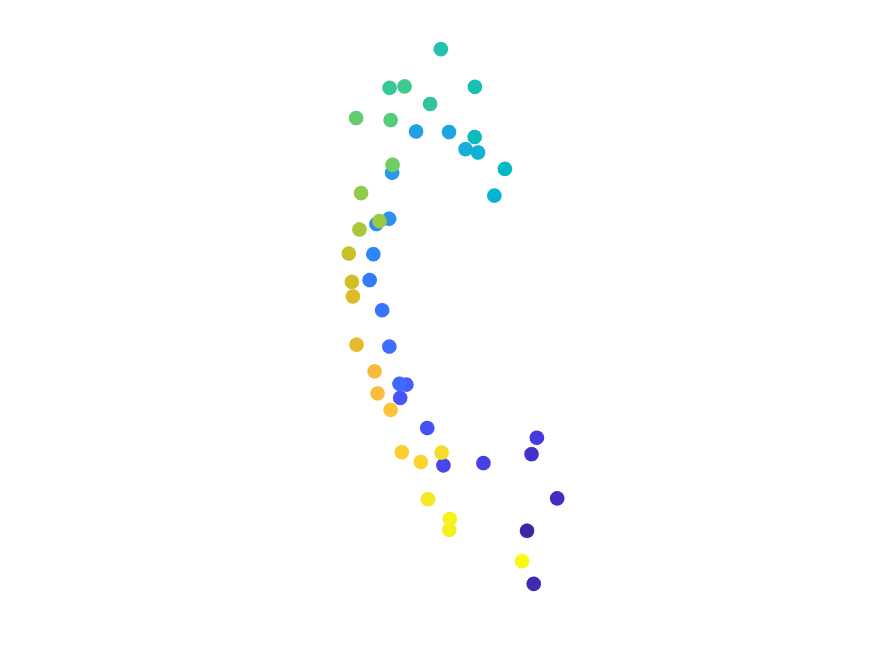}} &
            {\includegraphics[trim = 150 15 120 5, clip, height=.6in]{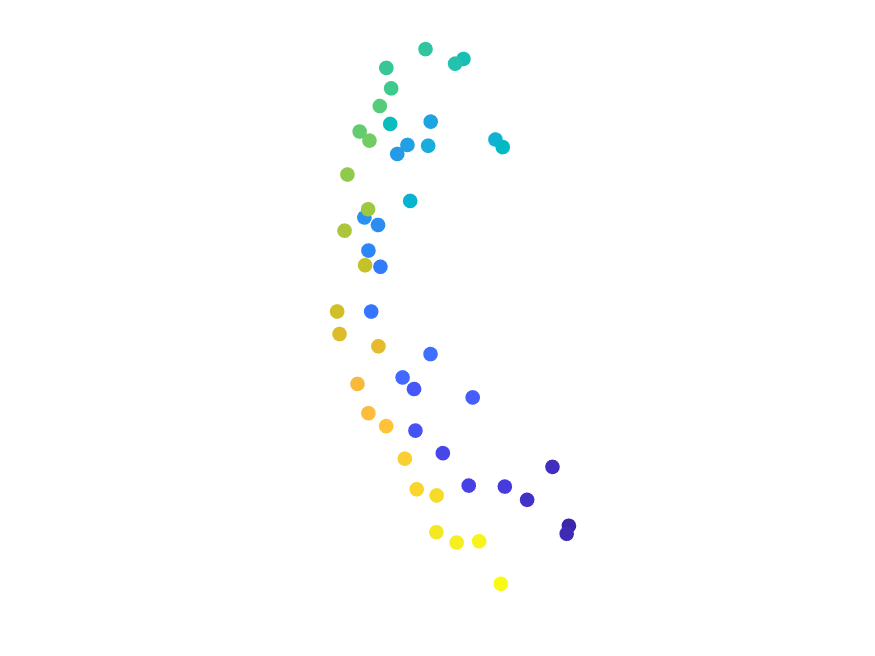}} &
            {\includegraphics[trim = 150 15 120 5, clip, height=.6in]{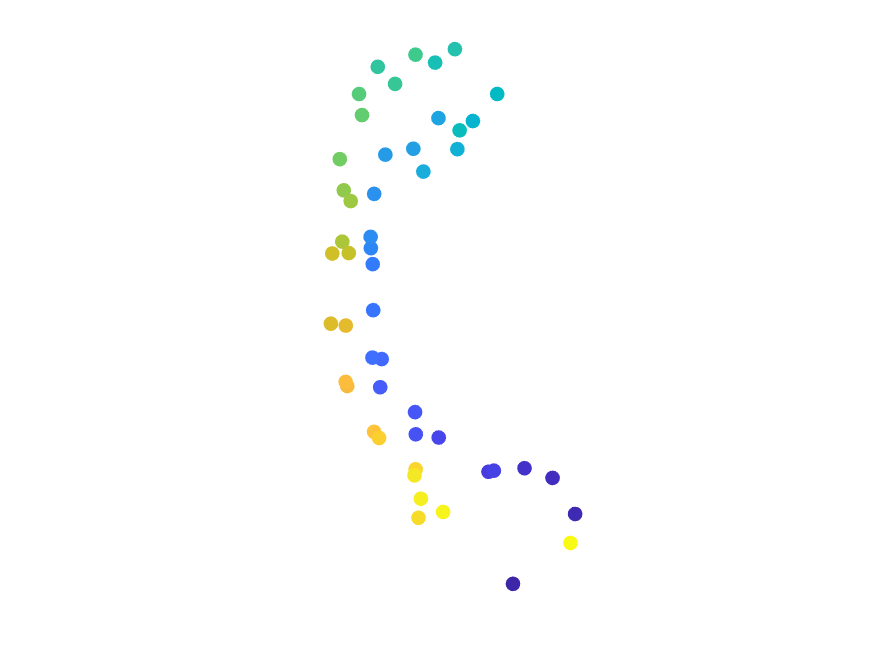}}\\\hline
            
        \fbox{\includegraphics[trim = 150 15 120 5, clip, height=.6in]{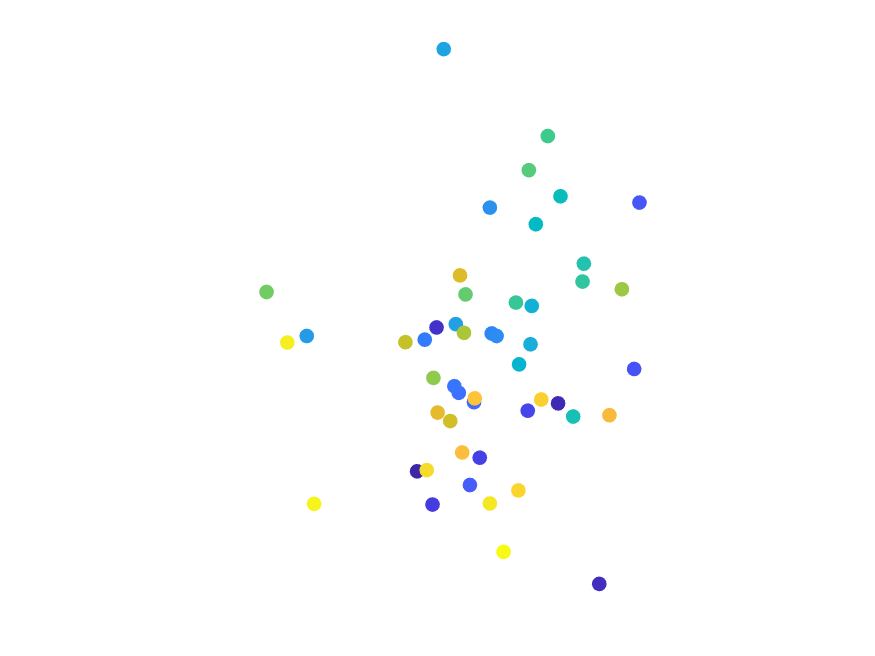}} &
        \fbox{\includegraphics[trim = 150 15 120 5, clip, height=.6in]{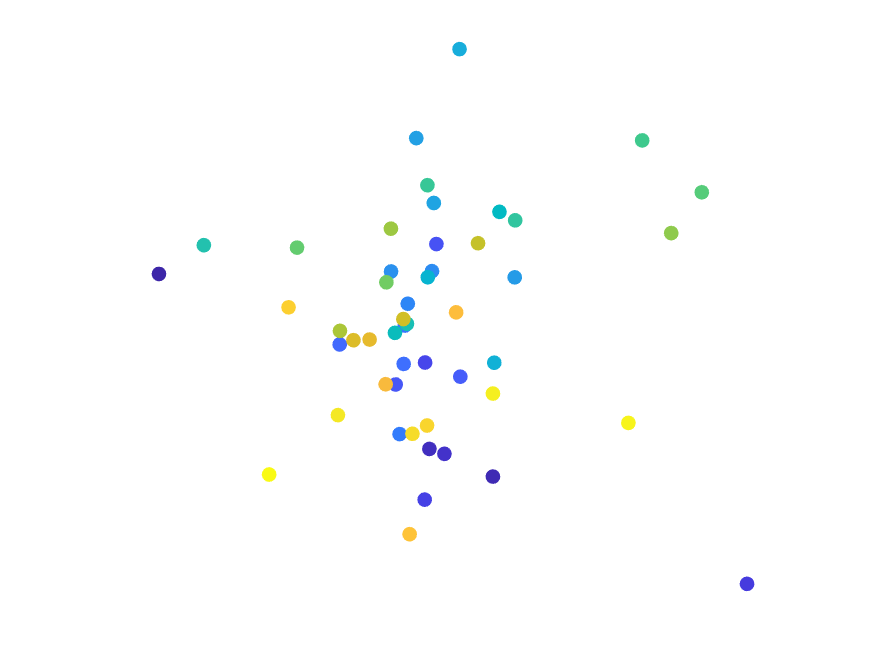}} &
        \fbox{\includegraphics[trim = 150 15 120 5, clip, height=.6in]{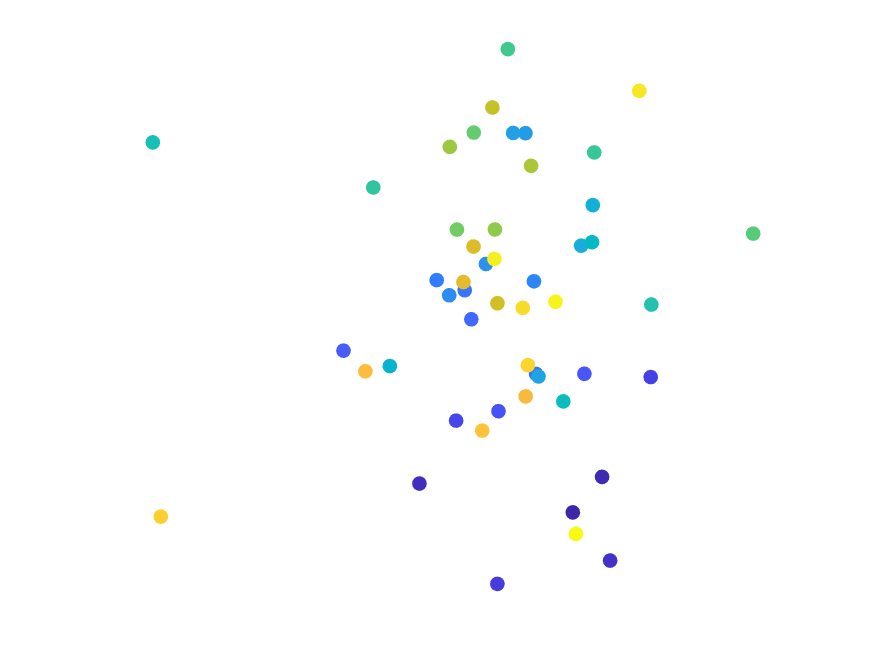}} &
        \fbox{\includegraphics[trim = 150 15 120 5, clip, height=.6in]{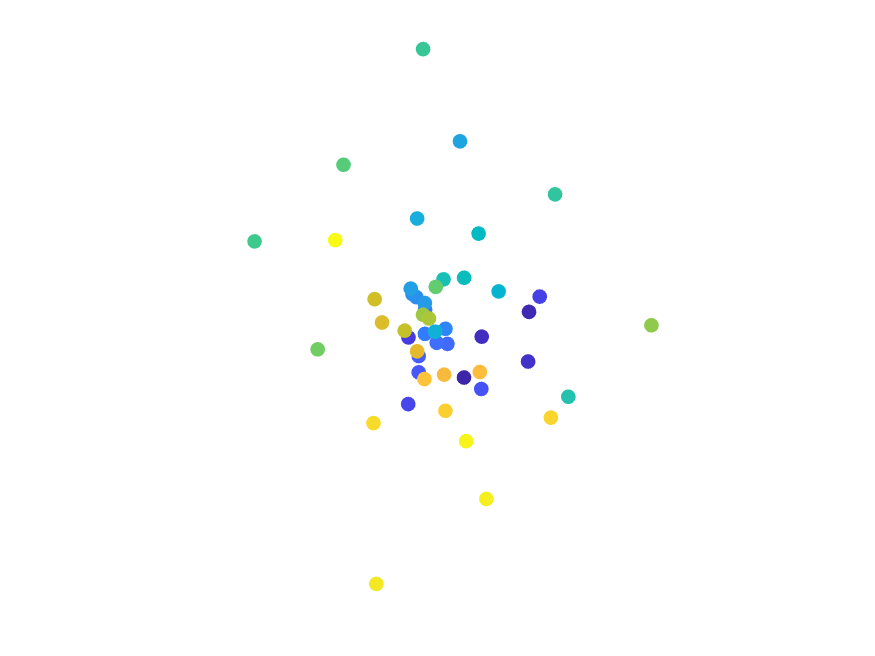}} &
        \fbox{\includegraphics[trim = 150 15 120 5, clip, height=.6in]{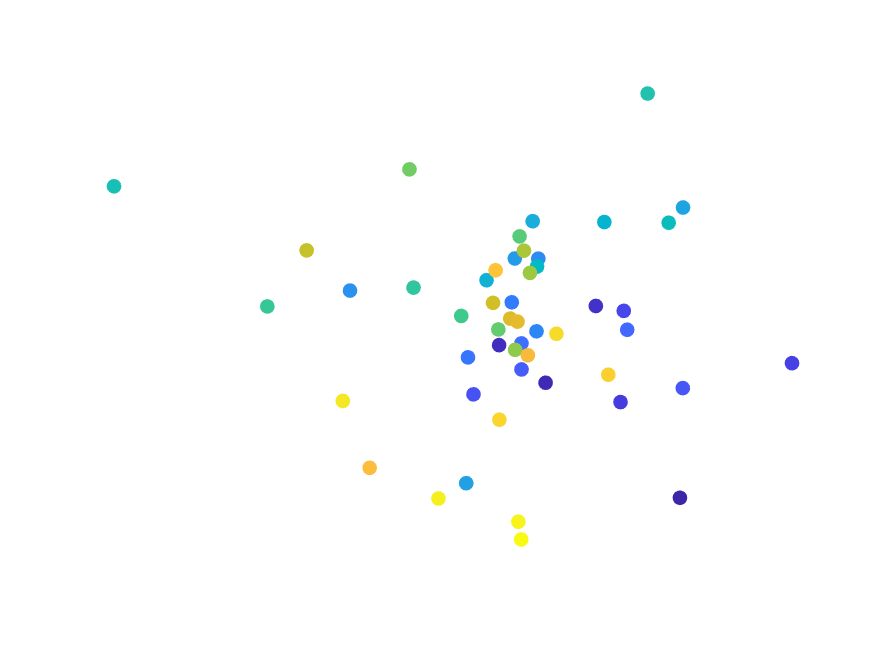}} &
        \fbox{\includegraphics[trim = 150 15 120 5, clip, height=.6in]{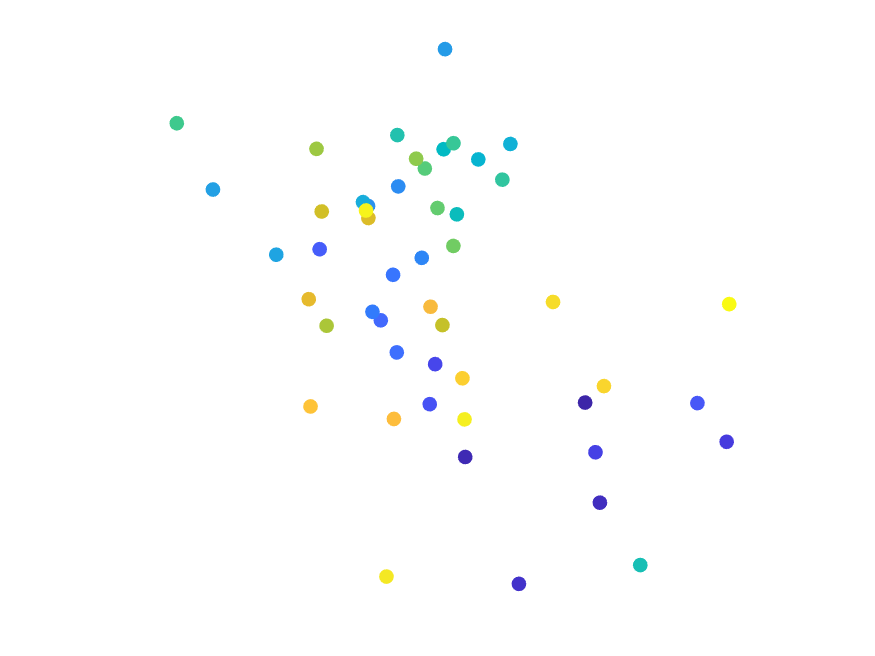}}\\\hline
            
        \end{tabular}

    \end{tabular}
    \caption{Left panel: A sample of ten corpus callosa from the sample of 409. Middle: The Fr\'echet mean corpus callosum. Right, top row: Six sample private corpus callosa privatized under the KNG framework on Kendall's 2D shape space. Right, middle row: Six sample private corpus callosa privatized point-wise with the Laplace mechanism. Right, bottom row: Six sample private corpus callosa privatized point-wise with the Laplace without accounting for rotational alignment. For more details, please refer to \ref{ss:K2D}.}
    \label{fig:CCs}
\end{figure}

As an application we consider the pre-processed corpus callosum data of \cite{cornea2017regression} from the Alzheimer's Disease Neuroimaging Initiative (ADNI). The data are from the mid-sagittal slices of MRIs (magnetic resonance images) and we refer to \cite{cornea2017regression} on details of how the data was processed. Their data contains 409 total corpus callosa. The left portion of Figure \ref{fig:CCs} displays 10 sample corpus callosa where the parameterization is visually displayed as a color gradient from blue to yellow. In the middle we display the Fr\'echet mean of all 409 corpus callosa. Having computed the mean, we then sanitize the mean with three techniques: (i) using the proposed KNG mechanism (Right: top row); (ii) for a comparison with (i), sanitize each landmark of the mean using the Laplace mechanism  splitting the privacy budget (Right: middle row); (iii) sanitize each landmark as in (ii) without factoring in rotational alignment in the corpus callosum. We expand on (ii) and (iii) in the Supplemental material. 
\begin{figure}
    \centering
    \begin{tabular}{|@{}c@{}@{}c@{}@{}c@{}@{}c@{}@{}c@{}@{}c@{} |@{}c@{}@{}c@{}@{}c@{}@{}c@{}@{}c@{}@{}c@{}|}
    \hline

        \includegraphics[trim = 120 0 130 0, clip,height = 0.8in]{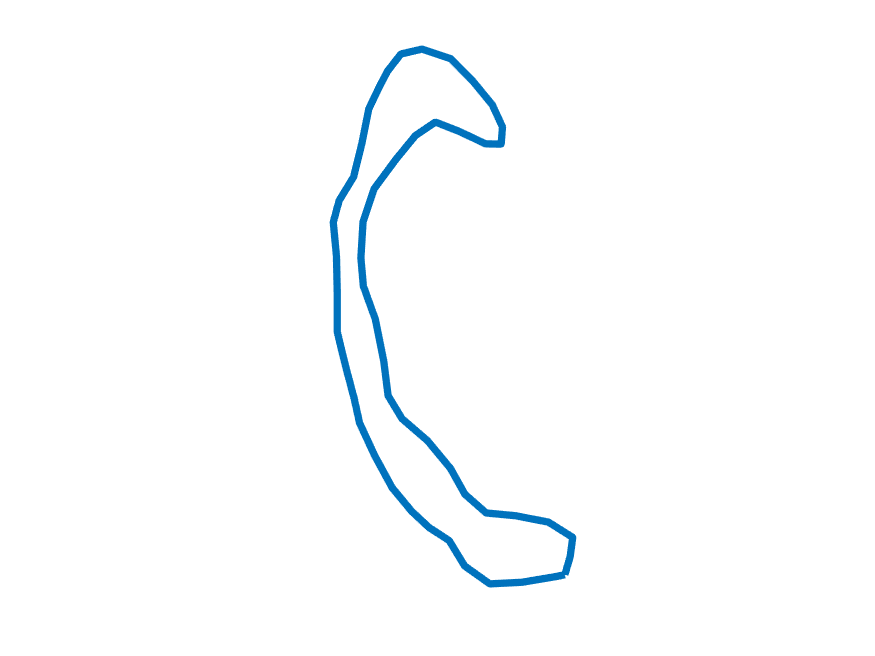} &
        \includegraphics[trim = 120 0 130 0, clip,height = 0.8in]{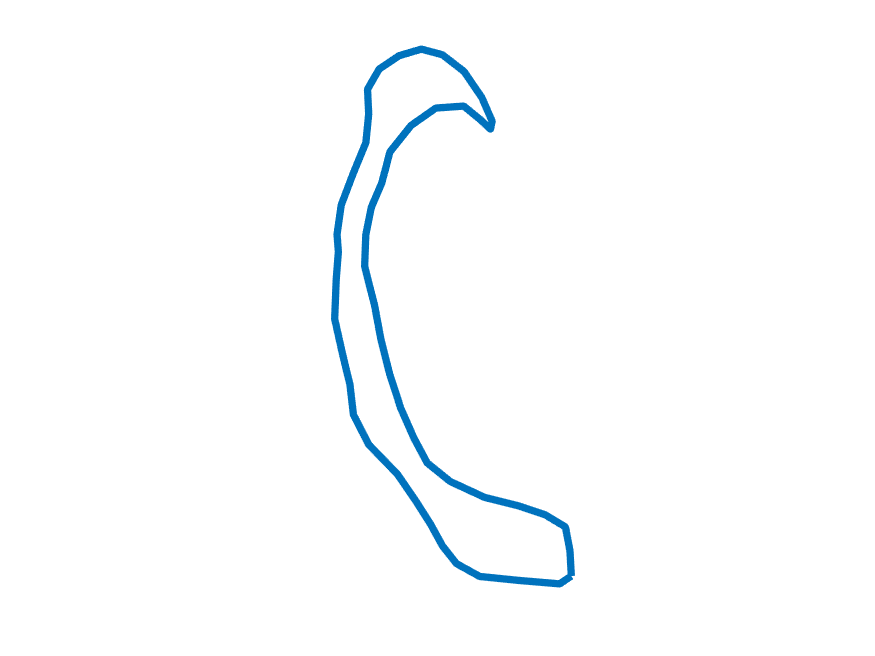} &
        \includegraphics[trim = 120 0 130 0, clip,height = 0.8in]{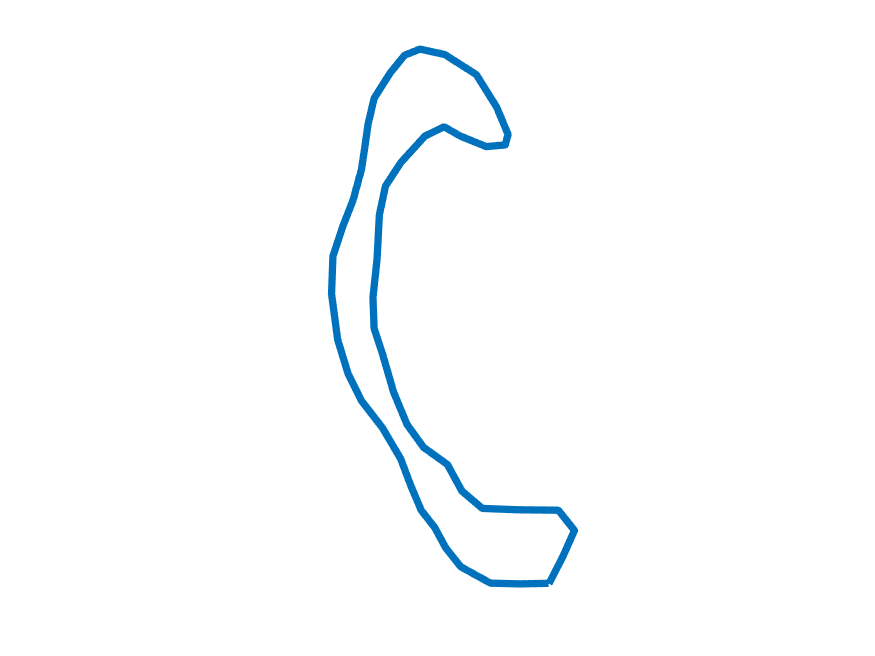} &
        \includegraphics[trim = 120 0 130 0, clip,height = 0.8in]{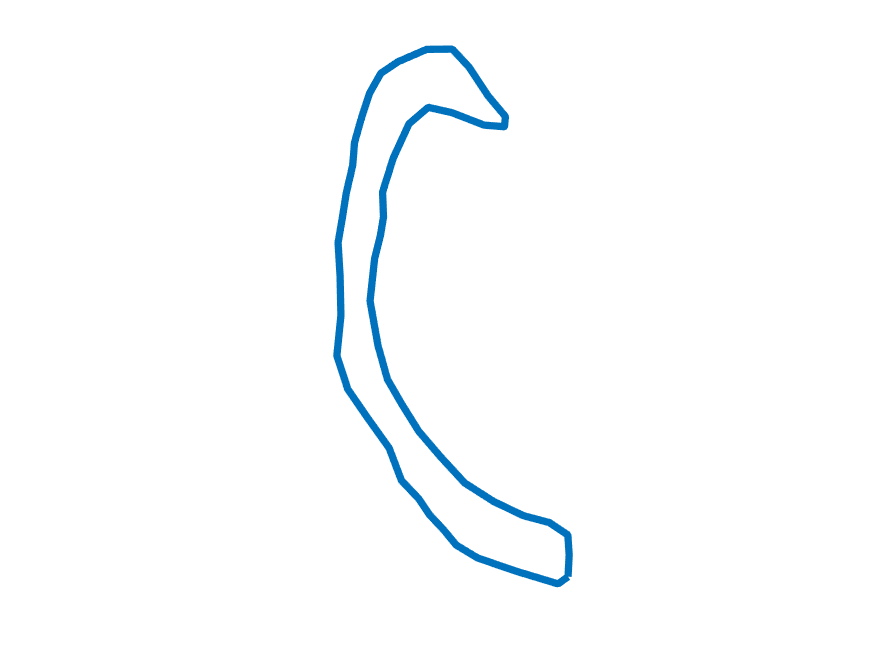} &
        \includegraphics[trim = 120 0 130 0, clip,height = 0.8in]{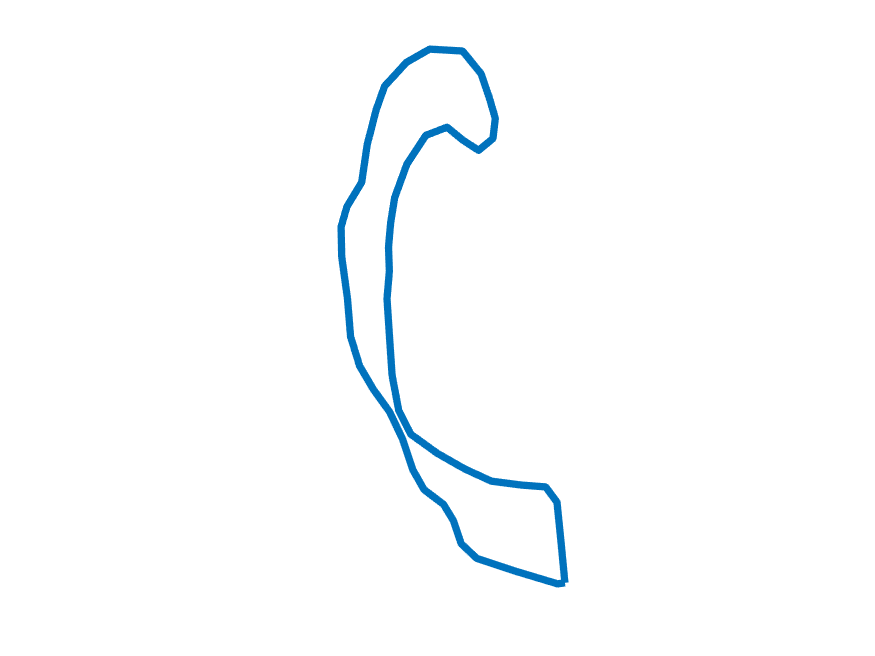} &
        \includegraphics[trim = 120 0 130 0, clip,height = 0.8in]{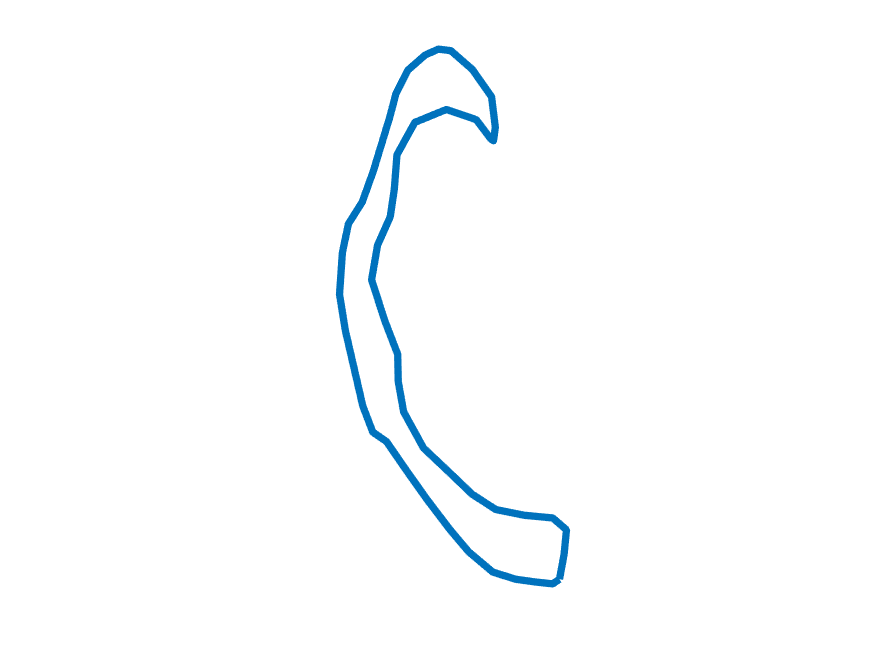} &
        \includegraphics[trim = 120 0 130 0, clip,height = 0.8in]{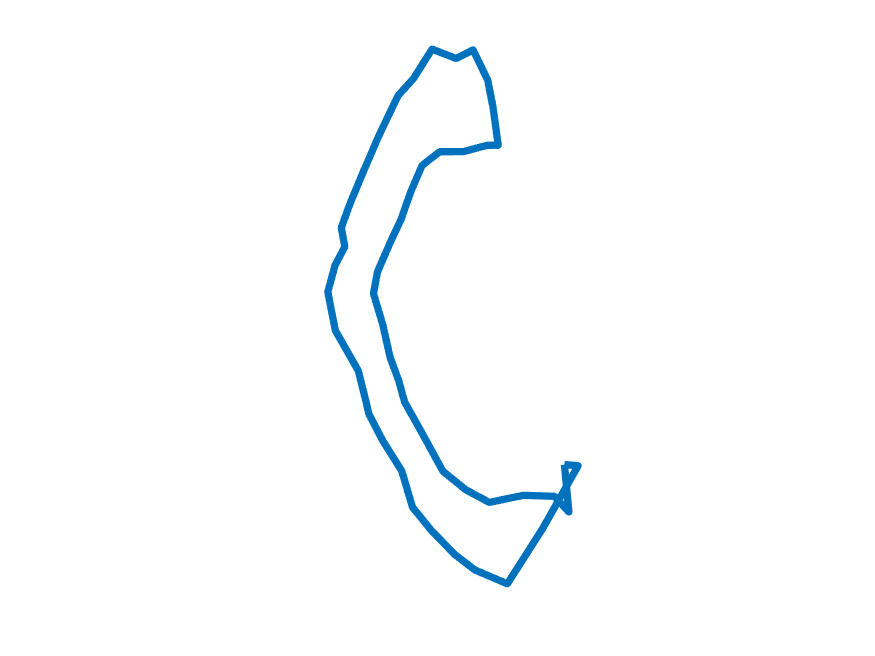} &
        \includegraphics[trim = 120 0 130 0, clip,height = 0.8in]{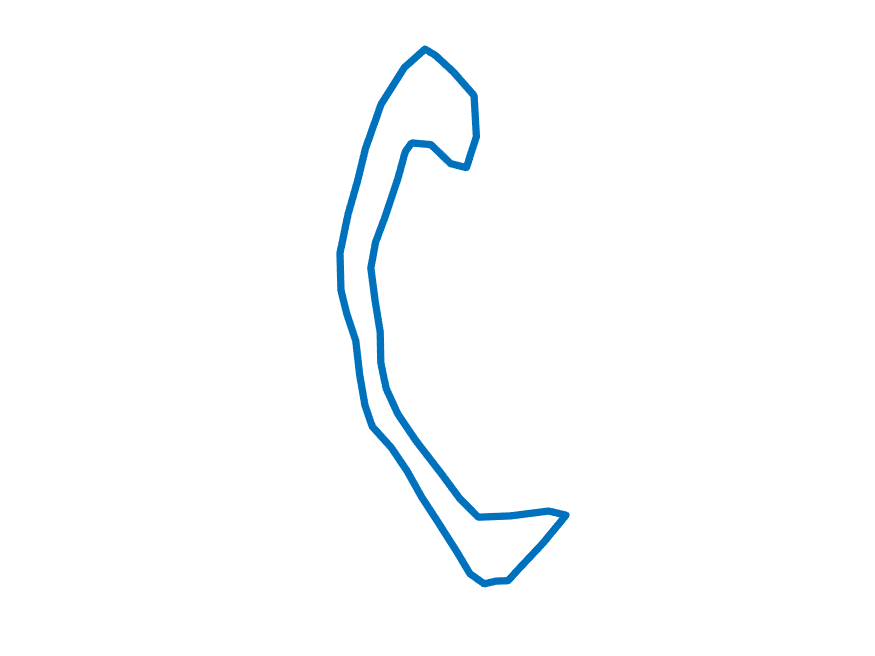} &
        \includegraphics[trim = 120 0 130 0, clip,height = 0.8in]{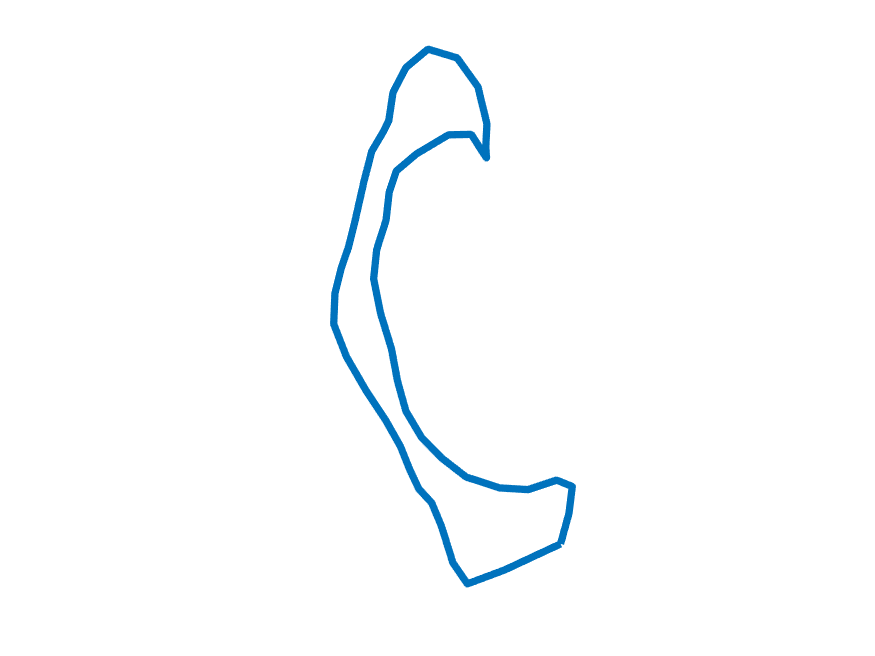} &
        \includegraphics[trim = 120 0 130 0, clip,height = 0.8in]{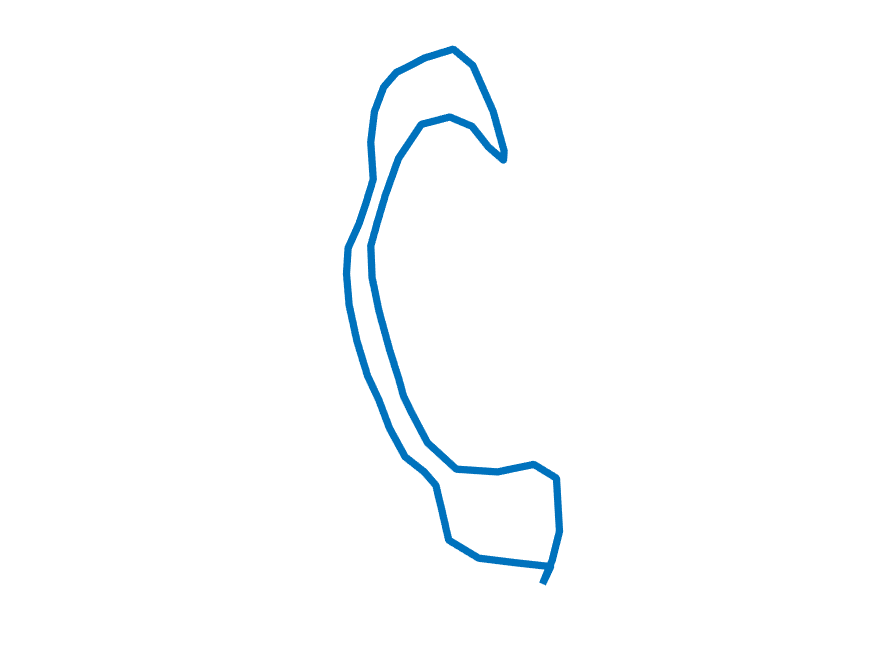} &
        \includegraphics[trim = 120 0 130 0, clip,height = 0.8in]{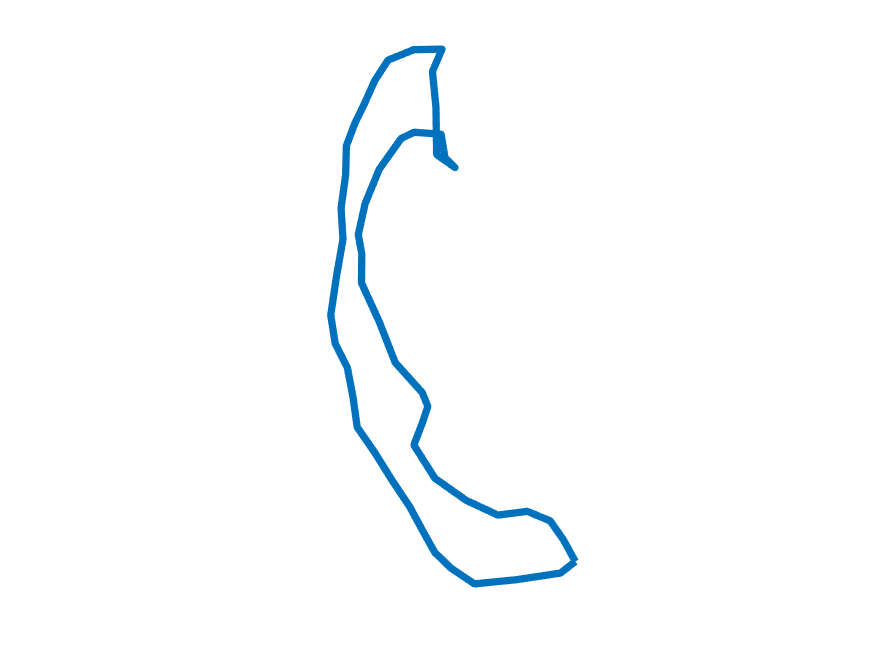} &
        \includegraphics[trim = 120 0 130 0, clip,height = 0.8in]{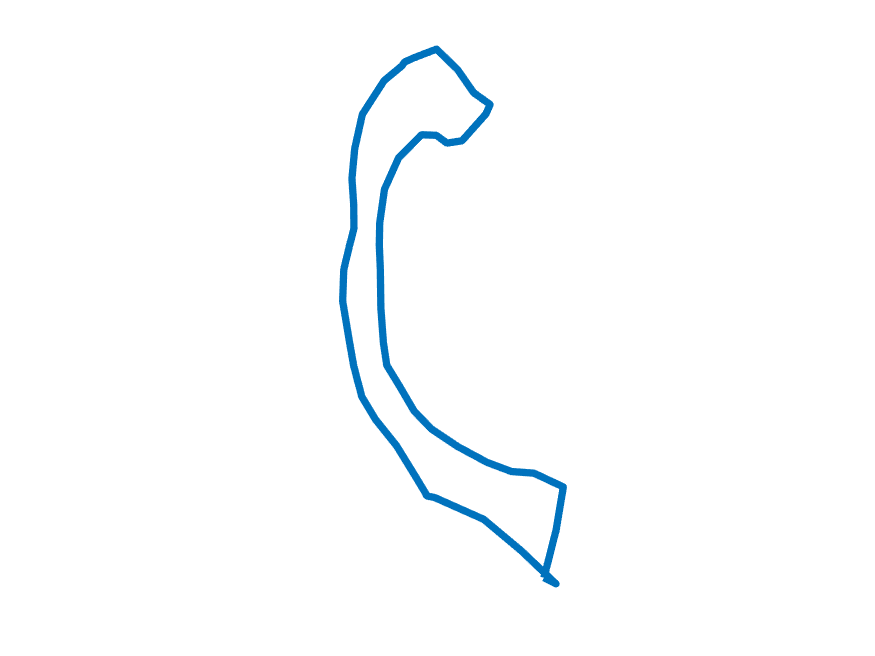} \\\hline
    \end{tabular}
    \caption{Differentially private corpus callosa estimates post-processed by using a first order local linear regression for smoothness. The left six are private under the proposed KNG framework while the right six are private under the point-wise Laplace as explained in Section \ref{ss:K2D}.}
    \label{fig:processed}
\end{figure}

As we can observe in Figure \ref{fig:CCs}, when one does not account for shape-preserving transformations (rotations alignment in this example), the shape of the corpus callosum is entirely destroyed during sanitization. Sanitization over the shape manifold (Right: top row) tends to retain the structure of corpus callosum when compared with sanitizing over each coordinate (Right: middle row), which appears more distorted; for example, we observe that the shape of the fifth corpus callosum (Right: middle row) has a contour with crossings. 
Further, we post-process to smooth the private estimates by a first order local linear regression. Left panel of Figure \ref{fig:processed} displays the post-processed KNG private estimates while the right panel shows point-wise Laplace private estimates, where rotational alignment has been carried out. We do not post-process the private estimates which do not consider rotational alignment as they appear to be non informative. Considering the mean shape in Figure \ref{fig:CCs}, the ``hook" at the top is quite prominent and comparing this to the private corpus callosa of Figure \ref{fig:processed} we notice that the KNG estimates tend to preserve this structure. Even though all the private estimates are processed in the same way, we notice the Laplace estimates are not only less smooth but can possess undesirable distortions, such as additional features such as loops. 
\section{Conclusions and future work}\label{sec:conclusions}
In this paper we demonstrate that versatility and powerful utility of the K-norm gradient mechanism on $\mathbb R^d$ carries over to the manifold setting. In particular, better control over global sensitivity when compared to the recently introduced manifold Laplace mechanism \citep{reimherr2021differential} for positively curved manifolds motivates the development of, to our knowledge, the first privacy mechanism for statistical shape analysis of 2D point configurations. Gains in utility when working directly on the manifold, as opposed to the higher-dimensional ambient space, are observed in the numerical examples: in terms of utility, the KNG not only outperforms the Euclidean mechanism but also the manifold Laplace mechanism. Further, the Laplace on the manifold and our mechanism are intricately connected as they both are the exponential of a norm in a particular tangent space as we note in Remark \ref{remark1}; this similarity in formulation and difference in sensitivity is tied to the better utility in the case of positively curved manifolds. 

Depending on the manifold, statistical utility gains enjoyed by working on the manifold can be tempered by expensive geometric computation. For example, in the case of SPDM, the clear gains in utility are obtained within the context of computationally expensive sampling from the KNG (see also Supplemental material) owing to repeated computations of matrix inverses and square roots related to the exponential, inverse-exponential maps and the geodesic distance. Indeed in practice, however, only a single instantiation suffices. In contrast, sampling from the Laplace on manifolds is straightforward, and there is thus room for improvement in sampling from the KNG on manifolds. 

Our work represents a step in the right direction in developing privacy mechanisms for a myriad of approaches to state-of-the-art statistical shape analysis on infinite-dimensional manifolds of curves and surfaces \citep{srivastava2016functional}, and diffeomorphisms  \citep{GrenanderMiller}. Moreover, our work opens up the possibility of developing geometry-driven privacy mechanisms for standard data analytic procedures used in various applications, such as principal component analysis (Grassmannian manifold of subspaces), rank-constrained matrix completion (quotient manifold of nonsingular matrices), and optimizing the Rayleigh quotient (Grassmannian), Procrustes problem (manifold of orthogonal matrices or frames), and pose estimation in computer vision (manifold of rotation matrices).


\begin{ack}
This work was funded in part by NSF SES-1853209.
\end{ack}

\bibliographystyle{abbrvnat}
\bibliography{references.bib}



\appendix
\LARGE{Supplemental to {\it Shape and Structure Preserving Differential Privacy}}
\nopagebreak
\normalsize

\section{Proof of Lemma 1}
Recall that $f:\mcM \to \mathbb R$ is strong geodesically convex with parameter $\lambda>0$ if for every $x,y \in \mcM$
\[
f(y) \geq f(x)+\left \langle \nabla f(x),\exp^{-1}(x,y) \right \rangle_x
+\frac{\lambda}{2} \rho^2(x,y);
\]
the above definition is interpreted in a local sense within a suitable neighbourhood in which the inverse exponential map is well-defined. We first establish that $x \mapsto U(x, D) $ is strong geodesically convex and derive upper and lower bounds on its Hessian within $B_{r}(p_0)$ with $r$ chosen as per Assumption 1. 

Note that when $s < \frac{\pi}{2 \sqrt{\kappa_{\max}}}$, the function $s \mapsto h_{\max}(s,\kappa_{\max})>0$, decreasing, and bounded above by 1, while $s \mapsto h_{\min}(s,\kappa_{\min})$ is bounded below by 1 and increasing. Thus $0<h_{\max}(2r,\kappa_{\max})\leq h_{\max}(\rho(x,y),\kappa_{\max})$ for every $x,y \in B_{r}(p_0)$ under the assumption on $r$. On the other hand, $h_{\min}(2r,\kappa_{\min})\geq h_{\min}(\rho(x,y),\kappa_{\min}) \geq 1$ for every $x,y \in B_r(m_0)$. For each $x_i \in B_{p_0}(r)$, from Lemma 1 of \cite{Alimisis2020ACP} we have that for $x,y \in B_r(m_0)$,
\begin{align}
\label{strong}
\rho^2(x,x_i) &\geq \rho^2(y,x_i)+\langle \nabla \rho^2(y,x_i),\exp^{-1}(y,x) \rangle_y+\frac{2h_{\max}(\rho(x,x_i),\kappa_{\max})}{2}\|\exp^{-1}(y,x)\|_y^2 \nonumber \\
& \geq \rho^2(y,x_i)+\langle \nabla \rho^2(y,x_i),\exp^{-1}(y,x) \rangle_y+\frac{2h_{\max}(2r,\kappa_{\max})}{2}\|\exp^{-1}(y,x)\|_y^2.
\end{align}
Summing over $i$ and dividing by $2n$ we get
\[
U(x,D) \geq U(y,D)+\langle \nabla U(y,D),\exp^{-1}(y,x) \rangle_y+\frac{h_{\max}(2r,\kappa_{\max})}{2}\|\exp^{-1}(y,x)\|_y^2 ,
\]
which implies that $U$ is strong geodesically convex with parameter $h_{\max}(2r,\kappa_{\max})$ inside $B_{r}(p_0)$. Thus, in local coordinates
\[
\nabla^2U(x,D) \succcurlyeq h_{\max}(2r,\kappa_{\max})\mathbb I_d.
\]

From assumption on the lower bound $\kappa_{\min}$ on sectional curvatures ensures, using Lemma 5 of \cite{Zhang2016FirstorderMF} derived for non-positively curved manifolds, we obtain
\[
U(x,D) \leq U(y,D)+\langle \nabla U(y,D),\exp^{-1}(y,x) \rangle_y+\frac{h_{\min}(2r,\kappa_{\min})}{2}\|\exp^{-1}(y,x)\|_y^2 .
\]
In other words, $U$ has a gradient that is geodesically Lipschitz with parameter $h_{\min}(2r,\kappa_{\min})$. As a consequence,
\[
\nabla^2U(x,D) \preccurlyeq h_{\min}(2r,\kappa_{\min})\mathbb I_d.
\]
On flat manifolds (e.g., $\mathbb R^d$, flat torus, cylinder) where $\kappa_{min}=\kappa_{\max}=0$ we have  $h_{\min}(2r,0)=h_{\max}(2r,0)=1$, and $U$ is strong gedesically convex with parameter 1 when $D$ is restricted to lie within a ball of any finite radius. Summarily, in local coordinates, 
\begin{equation}
\label{bounds}
h_{\max}(2r,\kappa_{\max})\mathbb I_d \preccurlyeq \nabla^2 U(x,D) \preccurlyeq h_{\min}(2r,\kappa_{\min}) \mathbb I_d, \quad \forall x \in B_r(m_0).
\end{equation}
We now consider the norm of the gradient vector field $\nabla U$. 
Let $\gamma$ be a unit-speed geodesic from $\bar x$
 to $x$, and denote by $\Gamma_{\bar x}^x:T_{\bar x}M \to T_x M$ the parallel transport along $\gamma$; from our assumption on the radius $r$, the geodesic lies entirely within $B_{r}(p_0)$. Then,
 \begin{align*}
 \|\nabla U(x,D)\|_x&=\big|\|\nabla U(x,D)\|_x-\|\Gamma_{\bar x}^x \nabla U(\bar x,D)\|_x \big|\\
  &\leq \|\nabla U(x,D)-\Gamma_{\bar x}^x \nabla U(\bar x,D)\|_x\\
  & \leq h_{\min}(2r,\kappa_{\min})\rho(\bar x,x) \thickspace.
 \end{align*}
The equality is due to the fact that the parallel transport map is an isometry between tangent spaces and fixes the origin; the first inequality follows from the reverse triangle inequality, while the last follows from the upper bound on the Hessian of $U$ in \eqref{bounds}.\\

To derive the lower bound on $\|\nabla U(x,D)\|_x$, note that under our assumption on the radius $r$, the function $U$ is strong geodesically convex within $B_{r}(p_0)$ since the function $h_{\max}$ is positive. From the lower bound on the Hessian of $U$ in \eqref{bounds}, for any $y \in B_{r}(p_0)$, we hence obtain
\[
\Big|\big\langle \nabla U(x,D)-\Gamma_y^x \nabla U(y,D),\thickspace \exp^{-1}(x,y) \big\rangle_x\Big| \geq h_{\max}(2r,\kappa_{\max})\rho(x,y)^2,
\]
where $\Gamma^x_y$ is the parallel transport along a geodesic from $y$ to $x$; applying Cauchy-Schwarz to the inner product results in
\[
\|\nabla U(x,D)-\Gamma_y^x \nabla U(y,D)\|_x \rho(x,y) \geq 
h_{\max}(2r,\kappa_{\max})\rho(x,y)^2.
\]
Dividing both sides by $\rho(x,y)$ and taking $y=\bar x$ leads to the 
desired lower bound, since, within the local coordinates at $x$, $\Gamma_{\bar x}^x \nabla U(\bar x, D)$ is the zero gradient vector field under the isometric parallel transport. 

\section{Simulation details}
Simulations pertaining to the sphere and Kendall shape space are done on a desktop computer with an Intel Xeon processor at 3.60GHz with 31.9 GB of RAM running windows 10. Simulations pertaining to symmetric positive-definite matrices were performed on the Pennsylvania State University’s Institute for Computational and Data Sciences’ Roar supercomputer. All simulations are done in Matlab. This content is solely the responsibility of the authors and does not necessarily represent the views of the Institute for Computational and Data Sciences.
\subsection{Fr\'echet Mean}
To compute the Fr\'echet mean, we use the standard gradient descent approach. Given a sample $D=\{x_1,x_2,\dots,x_n\}$ we initialize $\hat{\mu}_1$ at a data point. The variance functional which we wish to minimize is $U(x;D)=-\frac{1}{2n}\sum_{i=1}^n\rho^2(x,x_i)$. 
So, at iteration $k$ one takes a step in the direction of $t_k\nabla U(\hat{\mu}_{k-1};D)$ where $t_k\in(0,1]$ from $\mu_{k-1}$. That is, $\hat{\mu}_k=\exp (\hat{\mu}_{k-1},t_k\nabla U(\hat{\mu}_{k-1};D))$. To check convergence one could either check if the distance between adjacent iterations is smaller than some value $\beta_1>0$ or if the norm of the gradient is smaller than some value $\beta_2>0$, that is if $\rho(\hat{\mu}_1,\hat{\mu}_2)<\beta_1$ or if $\|\nabla U(\hat{\mu}_k;D)\|_{\hat{\mu}_k}<\beta_2$. The latter is convenient from a computational standpoint and how we measure convergence. We set $\beta_2=10^{-5}$ and $t_k=0.5$ for all $k$. Further, to avoid a computational timeout we set a maximum number of iterations to 500 however for all examples the algorithm converged within the first couple hundred iterations. Lastly, we assume the data follows Assumption \ref{A1} and thus convergence issues and local minima pose no observed issues.
\subsection{The Euclidean Laplace}
A standard distribution to generate differentially private estimates is the Euclidean Laplace which is a K-norm mechanism with the $\mathbb{\ell}_2$ norm. We sample from the distribution  $f(x)\propto \exp\{-\sigma^{-1}|x-\bar{x}|\}$ on $\mbR^d$ as in \cite{reimherr2021differential}.
\begin{enumerate}
    \item Sample a direction $V$ uniformly from $\mcS^{d-1}$.
    \item Sample a radial length $r$ from $\Gamma(d,1)$, the Gamma distrbution with $\alpha=d$ and $\beta=1$.
    \item Set $Y=\bar{x}+r\sigma V$.
\end{enumerate}
$Y$ will then be a draw from $f(x)$, the $d-$dimensional Euclidean Laplace distribution. Note that $x/|x|$ is uniform on $\mcS^{d-1}$ when $x \sim N_d(\bm 0_d,\mathbb I_d)$.
\subsection{SPDM simulations}
\subsubsection{Generating random samples}
We have that $\mbPk$ is the space of symmetric positive-definite matrices. We note that the Wishart distribution has support on $\mbPk$, and draw from $Y\sim W(V,df)$ where $E(Y)=V\cdot df$, $df>0$, and $V$ is a symmetric $k\times k$ matrix.
We require that the $D\subset B_r(p_0)$ but note there is non canonical choice for $p_0$ or $r$. We set $p_0$ to be the identity matrix $I_k$, $V=\frac{1}{k}I_k$, and $df=k$. Recall that since $\mbPk$ is negatively curved under the chosen metric, $r$ is finite but unconstrained. Operationally, however, there is no reason to believe that $\rho(y,I_k)\leq r$ for any chosen $r$, where the distance is manifold distance; we thus first set an $r$ and discard draws which are greater than distance $r$ from $p_0$ until we have sufficient draws for a desired sample size. For the simulations we set $k=2$ and $r=1.5$.
\subsubsection{Sampling from KNG on SPDM}\label{ss:KNGSPDM}
To sample from KNG for mean estimation on SPDM we use Metropolis-Hastings, a Markov chain Monte Carlo method. Let $\vech(\cdot)$ denote the vectorization of a symmetric matrix and $\vech^{-1}(\cdot)$ denote its inverse. Recall that the dimension $k=2$. At each iteration $i$ we generate a proposal $x'$ by first randomly drawing a matrix $v$ from the tangent space at the current stage $T_{x_i}\mathbb P(2)\cong Sym_2$, and moving along $\mathbb P(2)$ in the direction of $v$ using the exponential map by proposing $f(x'|x_i)=\exp(x_n,t\sigma v)$ where $t\in(0,1]$. We sample $v$ as $v=\vech^{-1}(\tilde{v})$, where $\tilde v$ is $k(k+1)/2=3$-dimensional vector of uniform random variables on $[-0.5,0.5]$; this indeed is not the same as a uniform draw on $Sym_2$. 
We then accept or reject the proposals of $f$ producing a Markov chain for the density $g(x)\propto \exp\{-\|U(x;D)\|_x/\sigma\}$. Here $\sigma=2\Delta/\epsilon=4r/n$ since $\epsilon=1$ and $\Delta$ is as in Theorem \ref{thm:sensitivity}.

\begin{enumerate}
    \item Initialize $x_0=\bar{x}$.
    \item At the $i$th iteration, draw a matrix $v\in Sym_k$, the tangent space of $x_i$, as described above.
    \item Generate a proposal $x'$ by letting $x'=\exp(x_i,t\sigma v)$.
    \item Accept $x'$ and set $x_{i+1}=x'$ with probability $g(x')/g(x_i)$. Otherwise, reject $x'$ and generate a new candidate by returning to the previous generation step.
    \item Return to step 2 until a chain of sufficient length has been created.
\end{enumerate}
For our simulations we tuned $t$ at each sample size, but had a minimal 5000 burn-in steps and a thinning jump width of approximately 5000 to avoid correlation between adjacent accepted samples. 
\subsubsection{Choosing the ambient space radius}
We compare privatization of our mechanism over $\mbPk$ to privatization in the ambient space of symmetric matrices $Sym_k$. To do this, we need to compute a comparable sensitivity. That is, given our data $D\subset B_r(I_k)\subset\mbPk$ we need to find $r_E$ of $D\subset\mcB_{r_E}(I_k)\subset Sym_k$, the radius of the geodesic ball in the space of symmetric matrices. Given the ball is centered at the identity matrix, it turns out that $r_E=e^r-1$ as shown in \cite{reimherr2021differential}.
\subsection{Sphere simulations}
\subsubsection{Generating random samples}
To generate random samples in $B_r(p_0)\subset\mcS_1^2$ we use polar coordinates. First, let $(\theta,\phi)$ be the pair of angles where $\theta\in[0,\pi]$ is the radial coordinate and $\phi \in [0,2\pi)$ is the polar angle. We uniformly sample on $\theta\in[0,r]$ and $\phi\in[0,2\pi)$ and set $r=\pi/8$. This results in data in $B_r(p_0)$ where $p_0$ is the north pole and higher concentration of data nearer $p_0$.
\subsubsection{Sampling from KNG on $\mcS_1^2$}
To sample from KNG on $ \mcS_1^2$ we use a Metropolis-Hastings algorithm in a manner similar to that described above in Section \ref{ss:KNGSPDM} with the only difference being how we make proposals. At each iteration $i$ we generate a proposal $x'$ in the following manner. First, we draw a sample direction by drawing a vector from $N_3(\bm 0_3,\mathbb I_3)$, scale this vector to have length $\sigma$, and then project this vector onto the tangent space $T_{x_i}\mcS_1^2$ of $x_i$ 
to produce a vector $\tilde{v}$. We then make a proposal by setting $x'=\exp(x_i,t\tilde{v})$ with $t\in(0,1]$. The projection onto the tangent space ensures that $\|\tilde{v}\|\leq \sigma$. 

Here $\sigma = \Delta/\epsilon$ where $\Delta = 2r(2-h(r,\kappa))/n$ and $\epsilon=1$. For our simulations we set $t=0.5$, a burn-in period of 20 000 and thinned the chain every 600 to avoid correlated adjacent samples.
\subsection{Kendall's 2D shape space simulations}
\subsubsection{KNG over Kendall's shape space}\label{ss:K2DKNG}
To sample from KNG on the space of Kendall shape space we use a Metropolis-Hastings algorithm similar to that described in Section \ref{ss:KNGSPDM}. Let $k$ be the number of landmarks for our set of shapes. At each iteration $i$ we generate a proposal $x'$ as follows:
\begin{enumerate}
    \item Sample $v=\{v_i\}$ such that the real and imaginary components of each $v_i$ are independent draws from  $U(0,1)$. 
    
    \item Set $\tilde{v}=v-\frac{1}{k}\sum v_i$, which is simply $v$ centered at the origin.
    \item Compute the horizontal component of $\tilde{v}$ on the tangent space of $x_i$, denote this as $\tilde{v}_h$.
    \item Propose $x'=\exp(x_i,t\tilde{v}_h)$.
\end{enumerate}
For our experiments we have a burn-in period of 7500.

We make two key assumptions that are needed for computing the sensitivity. First, we set $\kmax=4$ which is the maximal curvature of Kendall shape space. This perhaps can be improved upon, but we only require an upper bound on the curvature as per Assumption \ref{A1}, so we assume a worst case scenario. We set $r=\max_i \rho(\bar{x},x_i)$ the maximum shape distance from the Fr\'echet mean shape and all landmark configurations in our dataset.
\subsubsection{Shape point-wise Laplace}
Suppose we have a dataset $D=\{x_1,x_2,\dots,x_n\}$ such that $x_i=\{x_{i,j}\}\in \mbC^k$ is an set of $k$ labelled landmarks. We assume the shapes are all centered and scaled as in \ref{ss:K2D}. To compute a sanitized estimate in Euclidean space we sanitize each coordinate in the following manner. 
\begin{enumerate}
    \item Compute the Fr\'echet mean in shape space, denote this as $\bar{x}=\{\bar{x}_j\}$
    \item Rotationally align each shape $x_i$ to the mean $\bar{x}$ using Procrustes analysis, denote this as $\tilde{x}_i$. That is, $x_i\rightarrow Ox_i$ where $O=\argmin_{O\in SO(2)}\rho(\bar{x},Ox_i)$,
    \item For each landmark $j$, find the maximal distance from $\bar{x}_j$ in the real and imaginary direction, call these $d_{x,j}$ and $d_{y,j}$.
    \item Sanitize each landmark of the mean in the real and imaginary direction using the standard Laplace.
\end{enumerate}
Similar to Section \ref{ss:K2DKNG} above, rather than assume an $r$ for the ball in which the data lives, we determine this by setting this as a maximum distance to each landmark. So, at each landmark and in each direction we have $\sigma=\Delta/(\epsilon/2k)=4rk/n$ where we set $r=d_{x,j}$ for the real direction and $r=d_{y,j}$ in the imaginary direction. We divide the privacy budget by $2k$ since we sanitize each landmark in each coordinate. By dividing the privacy budget among the landmarks in this way, the entire shape will have total privacy budget $\epsilon$.
We compute the orthogonal alignment using standard Procrustes analysis.

\end{document}